\documentclass{article}
    \usepackage[nonatbib, final]{neurips_2020}

\usepackage[utf8]{inputenc} %
\usepackage[T1]{fontenc}    %
\usepackage{hyperref}       %
\usepackage{url}            %
\usepackage{booktabs}       %
\usepackage{amsfonts}       %
\usepackage{nicefrac}       %
\usepackage{microtype}      %

\usepackage{amsthm}
\usepackage{amsmath}

\usepackage{verbatim}

\usepackage{graphicx}
\usepackage{amssymb}
\usepackage{xcolor}

\usepackage{cite}

\usepackage[font=small]{caption}

\newtheorem{theorem}{Theorem}
\newtheorem{inftheorem}{Theorem}

\newtheorem{lemma}{Lemma}
\newtheorem{remark}{Remark}
\newtheorem{corollary}{Corollary}

\newtheorem*{theorem*}{Theorem}

\def \scriptW {\mathcal{W}}

\definecolor{Red}{rgb}{1,0,0}
\definecolor{Blue}{rgb}{0,0,1}
\definecolor{Olive}{rgb}{0.41,0.55,0.13}
\definecolor{Green}{rgb}{0,1,0}
\definecolor{MGreen}{rgb}{0,0.8,0}
\definecolor{DGreen}{rgb}{0,0.55,0}
\definecolor{Yellow}{rgb}{1,1,0}
\definecolor{Cyan}{rgb}{0,1,1}
\definecolor{Magenta}{rgb}{1,0,1}
\definecolor{Orange}{rgb}{1,.5,0}
\definecolor{Violet}{rgb}{.5,0,.5}
\definecolor{Purple}{rgb}{.75,0,.25}
\definecolor{Brown}{rgb}{.75,.5,.25}
\definecolor{Grey}{rgb}{.5,.5,.5}

\usepackage[symbol]{footmisc}

\usepackage{subcaption}

\newcommand{\littlesum}{\mathop{\textstyle \sum}}

\title{Optimal Lottery Tickets via \textsc{SubsetSum}:\\ Logarithmic Over-Parameterization is Sufficient}
\author{Ankit Pensia\thanks{Authors contributed equally to this paper and are listed alphabetically.} \\ University of Wisconsin-Madison\\ \texttt{ankitp@cs.wisc.edu} \And Shashank Rajput\footnotemark[1] \\University of Wisconsin-Madison\\   \texttt{rajput3@wisc.edu} \And  Alliot Nagle\\University of Wisconsin-Madison\\  \texttt{acnagle@wisc.edu} \And  Harit Vishwakarma\\ University of Wisconsin-Madison\\ \texttt{hvishwakarma@cs.wisc.edu} \And
Dimitris Papailiopoulos\\University of Wisconsin-Madison\\ \texttt{dimitris@papail.io}}

\begin{document}
\maketitle

\begin{abstract}
    The strong {\it lottery ticket hypothesis} (LTH) postulates that one can approximate any target neural network by only pruning the weights of a sufficiently over-parameterized random network.
    A recent work by  Malach et al.~\cite{MalachEtAl20} establishes the first theoretical analysis for the strong LTH: one can provably approximate a neural network of width $d$ and depth $l$, by pruning a random one that is a factor $O(d^4l^2)$ wider and twice as deep. This polynomial over-parameterization requirement is at odds with recent experimental research that achieves good approximation with networks that are a small factor wider than the target.
    In this work, we close the gap and offer an exponential improvement to the over-parameterization requirement for the existence of  lottery tickets. We show that any target network of width $d$ and depth $l$ can be approximated by pruning a random network that is a factor $O(\log(dl))$ wider and twice as deep.
    Our analysis heavily relies on connecting pruning random  ReLU networks to random instances of the \textsc{SubsetSum} problem. 
    We then show that this logarithmic over-parameterization is essentially optimal for constant depth networks. 
    Finally, we verify several of our theoretical insights with experiments.
\end{abstract}

\section{Introduction}\label{sec:intro}

Many of the recent unprecedented successes of machine learning can be partially attributed to state-of-the-art neural network architectures that come with up to tens of billions of trainable parameters.
Although test accuracy is one of the gold standards in choosing one of these architectures, in many applications having a ``compressed'' model  is of practical interest, due to typically reduced energy, memory, and computational footprint \cite{deng2020model,han2015learning, HanEtAl16,li2016pruning, wen2016learning,hubara2016binarized,hubara2017quantized,he2017channel,wu2016quantized,zhu2016trained, he2018amc,zhu2017prune,ChengEtAl19,mlsys2020_73}.
Such a compressed form can be achieved by either modifying the architecture to be leaner in terms of the number of weights, or by starting with a high-accuracy network and pruning it down to one that is sparse in some representation domain, while not sacrificing much of the original network's accuracy. A rich and long body of research work shows that one can prune a large network to a tiny fraction of its size, while maintaining (or sometimes even improving) its original accuracy 
\cite{deng2020model,han2015learning, HanEtAl16,li2016pruning, wen2016learning,hubara2016binarized,hubara2017quantized,he2017channel,wu2016quantized,zhu2016trained, he2018amc,zhu2017prune,ChengEtAl19,mlsys2020_73}.

Although network pruning dates back at least to the 80s~\cite{HassibiStork93,LevinEtAl94,MozerSmolensky89,LeCunEtAl90},  there has been a recent flurry of results that introduce sophisticated pruning, sparsification, and quantization techniques which lead to significantly compressed model representations that attain state-of-the-art accuracy~\cite{deng2020model,  ChengEtAl19,mlsys2020_73}. Several of these pruning methods require many  rounds of pruning and retraining, resulting in a time-consuming and hard to tune iterative meta-algorithm. 

Could we avoid this pruning and retraining cycle, by ``winning'' the weight initialization lottery, with a ticket that puts stochastic gradient descent (SGD) on a path to, not only accurate, but also very sparse neural networks? Frankle and Carbin~\cite{FrankleCarbin18} define {\it lottery tickets} as sparse subnetworks of a randomly initialized network, that if trained to full accuracy just once, can reach the performance of the fully-trained but dense target model. If these lottery tickets can be found efficiently, then the computational burden of pruning and retraining can be avoided.

Several works build  and expand on the {\it lottery ticket hypothesis} (LTH) that was introduced by Frankle and Carbin~\cite{FrankleCarbin18}.
 Zhou et al.~\cite{ZhouEtAl20} experimentally analyze different strategies for pruning.
 Frankle et al.~\cite{FrankleEtAl20} relate the existence of lottery tickets to a notion of stability of networks for SGD. Cosentino et al.~\cite{cosentino2019search}  show that  lottery tickets are amenable to adversarial training, which can lead to both sparse and robust neural networks. Soelen et al.~\cite{8852405} show that the winning tickets from one task are transferable to another related task. Sabatelli et al.~\cite{sabatelli2020transferability} show further that the trained winning tickets can be transferred with minimal retraining on new tasks, and sometimes may even generalize better than models trained from scratch specifically for the new task.

Along this literature, a striking finding was reported by Ramanujan et al.~\cite{RamanujanEtAl20} and Wang et al.~\cite{WangEtAl19}:
one does not even need to train the lottery tickets to get high test accuracy; they find that 
high-accuracy, sparse models simply reside within larger random networks, and appropriate pruning can reveal them.
However, these high-accuracy lottery tickets do not exist in plain sight, and finding them is in itself a challenging (as a matter of fact NP-Hard) computational task. 
Still, the mere existence of these random substructures is interesting, and one may wonder whether it is a universal phenomenon.

The phenomenon corroborated by the findings of Ramanujan et al.~\cite{RamanujanEtAl20} is referred to as the {\it strong lottery ticket hypothesis}. 
Recently, Malach et al.~\cite{MalachEtAl20} proved the strong LTH for  fully connected networks with ReLU activations.
In particular, they show that one can approximate any target neural network, by pruning a sufficiently over-parameterized network of random weights. The degree of over-parameterization, i.e., how much larger this random network has to be, was bounded by a polynomial term with regards to the width $d$ and depth $l$ of the target network. 
Specifically, their analysis requires the random network to be of width $\widetilde{O}(d^5 l^2 /\epsilon^2)$
and depth $2l$, to allow for a pruning that leads to an $\epsilon$-approximation with regards to the output of the target network, for any input in a bounded domain.
They also show that the required width can be improved to 
$\widetilde{O}(d^2 l^2 /\epsilon^2)$
under some sparsity assumptions on the input.
Although polynomial, the required degree of over-parameterization is still too demanding, and it is unclear that it explains the experimental results that corroborate the strong LTH. For example, it does not reflect the findings in Ramanujan et al.~\cite{RamanujanEtAl20} that only seem to require a constant factor over-parameterization, e.g., a randomly initialized Wide ResNet50, can be pruned to a model that has the accuracy of a fully trained ResNet34. In this work, our goal is to address the following question:
\begin{quotation}
\begin{center}
    \noindent \it What is the required over-parameterization so that a network of random weights can be pruned to approximate a smaller target network?
\end{center}
\end{quotation}
Towards this goal, we identify the crucial step in the proof strategy of Malach et al.~\cite{MalachEtAl20} that leads to the polynomial factor requirement on the per-layer over-parameterization.
Say that one wants to approximate a weight $w^*\in[-1,1]$ with a random number drawn from a distribution, e.g., $\text{Uniform}([-1,1])$. 
If we draw  $n=O(1/\epsilon)$ i.i.d. samples, 
$$X_1,\ldots, X_n\sim\text{Uniform}([-1,1]),$$ 
then one of these $X_i$'s will be $\epsilon$ close to $w^*$, with constant probability. In a way, this random sampling generates an $\epsilon$-net, i.e., a set of numbers such that any weight in $[-1,1]$ is $\epsilon$ close to one of these $n$ samples. Pruning that set down to a single number, i.e., selecting the point closest to $w^*$, leads to an $\epsilon$ approximation for a single weight.
At the cost of a polynomial overhead on the number of samples $n$, one can appropriately apply this to every weight of a given layer, and then all layers of the network, in order to obtain a uniform approximation result for every possible input of the neural network.

Perhaps the surprising fact is that one can achieve the same approximation, with exponentially smaller number of samples, while still using a pruning algorithm. The idea is that instead of approximating the target weight $w^*$ with only one of the $n$ samples $X_1,\ldots, X_n$, one could add a subset of them to get a better approximation. 
Indeed, when $n=O(\log(1/\epsilon))$, then, with high probability, there exists a subset $S\subseteq[1,\ldots,n]$, such that
$$\left|w^*-\sum\nolimits_{i\in S} X_i\right|\le \epsilon.$$
Hence, approximating $w^*$ by the best subset sum $\sum_{i\in S} X_i$, offers an exponential improvement on the sample complexity requirement compared to approximating it with one of the $X_i$'s. 

The above approximation result for random $X_i$'s is drawn from a line of work on the random \textsc{SubsetSum} 
 problem~\cite{Karp72,KarmarkarEtAl86,Lueker82,Lueker98}.
In particular, Lueker~\cite{Lueker98} showed that one can achieve an $\epsilon$-approximation of any target number $t\in[-1,1]$, with only $O (\log\left(1/\epsilon\right))$ i.i.d. samples from any distribution that contains a uniform distribution on $[-1,1]$, by using the solution to the following (NP-hard in general) problem
$$\min_{S}\left|t-\sum\nolimits_{i\in S} X_i\right|.$$
Adapting this result to ReLU activation functions is precisely what allows us to get an exponential improvement on the over-parameterization required for the strong LTH to be true.

\paragraph{Our Contributions:}
In this work, by adapting the random \textsc{SubsetSum} results of Lueker~\cite{Lueker98} to ReLU activation functions, we offer an exponential improvement on the over-parameterization required for the strong LTH to be true.
In particular we establish the following result.
\begin{inftheorem} (informal)
A randomly initialized network with width 
$O(d\log( dl/ \min\{\epsilon,\delta\}))$ and depth $2l$,
with probability at least $1 - \delta$, can be pruned to approximate any neural network with width $d$ and depth $l$, up to error $\epsilon$.
\end{inftheorem}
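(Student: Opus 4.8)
The plan is to reduce the statement, layer by layer and entry by entry, to the random \textsc{SubsetSum} guarantee of Lueker quoted above, exploiting the key leverage that this guarantee needs only $O(\log(1/\epsilon'))$ i.i.d.\ samples: hence a $\mathrm{poly}(d,l)$ blow-up of the accuracy and confidence parameters \emph{inside the logarithm} costs only an additive $O(\log(dl))$ in the number of neurons per gadget. First I would fix (as is standard in this line of work) the normalization that the target's weight matrices and input domain are bounded, so that a uniform additive error of $\epsilon_0 := \epsilon/\mathrm{poly}(d,l)$ in every realized weight propagates, through $l$ layers of $1$-Lipschitz ReLUs, to a final error at most $\epsilon$; similarly fix a per-entry failure budget $\delta_0 := \delta/\mathrm{poly}(d,l)$ so that a union bound over the $O(d^2 l)$ target weights leaves total failure probability at most $\delta$.

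Next I would describe the per-layer gadget. A single target layer $x \mapsto \sigma(Wx)$ with $W\in[-1,1]^{d\times d}$ is emulated by two layers of the random net (hence depth $2l$ overall): a first auxiliary layer consisting of $d$ \emph{blocks}, one per input coordinate $j$, each block having $k = O\!\big(\log(dl/\min\{\epsilon,\delta\})\big)$ neurons whose input edges are pruned down to the single edge from coordinate $j$, so that neuron $(j,m)$ computes $\sigma(u_{jm}x_j)$ for the random weight $u_{jm}$ (say, uniform on $[-1,1]$); and a second layer of $d$ neurons whose $k$-th neuron is meant to carry, after pruning its incoming edges to a subset, the pre-activation $\sum_j W_{kj}x_j$ and thus output $\sigma(\sum_j W_{kj}x_j)$. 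The point that keeps the total width at $O(dk)=O(d\log(dl/\min\{\epsilon,\delta\}))$ rather than $O(d^2k)$ is that the \emph{same} $k$ neurons of block $j$ feed \emph{all} $d$ output neurons; the independent randomness used to synthesize a particular target entry $W_{kj}$ is supplied by the output edges $v_{jmk}$, which are fresh for each triple. For $x_j\ge 0$ the block-$j$ neurons with $u_{jm}>0$ contribute exactly $v_{jmk}u_{jm}x_j$ to neuron $y_k$, so keeping a subset $S_{jk}$ with $\big|\sum_{m\in S_{jk}} v_{jmk}u_{jm} - W_{kj}\big|\le \epsilon_0$ reproduces the $j$-th summand of $y_k$'s pre-activation up to $\epsilon_0|x_j|$; at the input layer one additionally exploits the neurons with $u_{jm}<0$ to cover $x_j<0$ via an independent \textsc{SubsetSum} instance of the same shape. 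Summing over $j$ and using that $\sigma$ is $1$-Lipschitz yields the per-layer error, and chaining over the $l$ layers yields the global bound.

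Finally, to invoke Lueker's theorem for a fixed entry $W_{kj}$, condition on the signs of $u_{j1},\dots,u_{jk}$ (a Chernoff bound gives $\Omega(k)$ of each sign except with probability $e^{-\Omega(k)}$); then, conditionally, the candidate summands $\{v_{jmk}u_{jm} : u_{jm}>0\}$ are i.i.d.\ with a law that stochastically dominates a rescaled uniform distribution on an interval around the origin (the density of a product of independent uniforms is bounded below near $0$), which is exactly the hypothesis under which $O(\log(1/\epsilon_0)+\log(1/\delta_0))$ samples suffice to $\epsilon_0$-approximate any target in the interval with probability $1-\delta_0$. A union bound over all $(j,k)$, all $l$ layers, and the Chernoff events then finishes the argument, and since $\log(1/\epsilon_0)+\log(1/\delta_0)=O(\log(dl/\min\{\epsilon,\delta\}))$ the width is $O(d\log(dl/\min\{\epsilon,\delta\}))$ as claimed. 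I expect the main obstacle to be precisely this transfer of the \emph{linear} \textsc{SubsetSum} guarantee across the ReLU: one must route signs so that the reconstructed quantity equals the pre-activation $\sum_j W_{kj}x_j$ \emph{uniformly over the whole input domain} and not merely on one orthant, and one must verify that the effective summands --- products of the two layers' random weights --- genuinely satisfy Lueker's ``contains a uniform distribution'' hypothesis, perhaps after an innocuous rescaling. By contrast, the error propagation and the union bounds are routine bookkeeping, and it is exactly because \textsc{SubsetSum} is invoked only at logarithmic sample cost that hiding $\mathrm{poly}(d,l)$ factors inside $\epsilon_0,\delta_0$ remains affordable.
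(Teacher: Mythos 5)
Your proposal is correct and follows essentially the same route as the paper: the same two-layers-per-target-layer gadget with $d$ blocks of $O(\log(dl/\min\{\epsilon,\delta\}))$ neurons reused across all output neurons, the same sign-splitting of the first-layer weights to route $x_j>0$ and $x_j<0$ through separate \textsc{SubsetSum} instances, the same observation that the product of the two layers' uniform weights has density bounded below near the origin so Lueker's hypothesis holds, and the same union bounds and Lipschitz error propagation. The only cosmetic difference is that you condition on the signs of the first-layer weights via a Chernoff bound, whereas the paper simply notes that pruning to one sign yields a mixture of a point mass at zero and a uniform, whose product law still contains a uniform component.
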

The formal statement of  Theorem~\ref{thm:upperBound} is provided in Section~\ref{sec:upperBound}.
Note that in the above result, no training is required to obtain the sparser network within the random model, i.e., {\it pruning is all you need}. Further, note that we guarantee a good approximation to {\it any} network of fixed width and depth by pruning a {\it  single} larger network that is logarithmically wider. That is, the set of networks obtained by pruning the larger random network amounts to an $\epsilon$-net with regards to the smaller target networks.

We then show that this logarithmic over-parameterization is essentially optimal for constant depth networks. 
Specifically, we provide a lower bound for 2-layered networks that matches the upper bound proposed in Theorem~\ref{thm:upperBound}, up to logarithmic terms with regards to the width (see Theorem~\ref{thm:lowerBoundStrong} in Section~\ref{sec:lowerBound} for a formal statement):
\begin{inftheorem} (informal)
There exists a 2-layer neural network with width $d$ which cannot be approximated to error within $\epsilon$ by pruning a randomly initialized 2-layer network, unless the random network has width at least $\Omega(d\log(1/\epsilon))$.
\end{inftheorem}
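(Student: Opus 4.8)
The plan is an information-theoretic counting argument: the family of functions realizable by pruning a \emph{fixed} $2$-layer network is finite and not too large, whereas the family of $2$-layer targets of width $d$ contains an exponentially large, well-separated packing, so most targets must be missed. Concretely, a $2$-layer network of width $N$ with input and output dimension $1$ (plus biases) has $O(N)$ edges, hence at most $2^{O(N)}$ distinct subnetworks; this holds for every realization of the random weights, so the argument is in fact deterministic and the phrase ``randomly initialized'' only serves to match the setting of Theorem~\ref{thm:upperBound}. It therefore suffices to exhibit a set $\mathcal{F}$ of width-$d$, $2$-layer ReLU networks on a bounded input domain that is $2\epsilon$-separated in the supremum norm and satisfies $|\mathcal{F}| = 2^{\Omega(d\log(1/\epsilon))}$. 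Indeed, if the pruned class $\epsilon$-covered $\mathcal{F}$, then sending each $f\in\mathcal{F}$ to a nearby pruned network would be injective (two targets sharing an $\epsilon$-approximant are within $2\epsilon$ of each other), forcing $2^{\Omega(d\log(1/\epsilon))}\le 2^{O(N)}$ and hence $N=\Omega(d\log(1/\epsilon))$; contrapositively, any width-$N$ network with $N$ below this threshold misses some member of $\mathcal{F}$ — in fact almost every member.

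For the packing I would take the one-hidden-layer family $f_a(x)=\sum_{i=1}^{d} a_i\,\sigma(x-t_i)$ on $x\in[0,1]$, with fixed, equally spaced thresholds $t_i=i/(d+1)$ and output weights $a=(a_1,\dots,a_d)$ ranging over a uniform grid of spacing $\eta$ in $[-1,1]$; this is a legitimate width-$d$, depth-$2$ network and there are roughly $(2/\eta)^d$ of them. To lower bound the separation, fix $a\neq a'$, set $\delta=a-a'$, and let $j$ be the largest index with $\delta_j\neq 0$; on $[t_{j-1},t_{j+1}]$ the difference $f_a-f_{a'}$ is piecewise linear with a single slope change of magnitude $|\delta_j|\ge\eta$ at $t_j$, so its second difference across $t_{j-1},t_j,t_{j+1}$ equals $|\delta_j|/(d+1)$, which by the triangle inequality forces $\|f_a-f_{a'}\|_\infty\ge \eta/(4(d+1))$. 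Choosing $\eta\asymp\epsilon d$ then makes $\mathcal{F}$ a $2\epsilon$-separated family of size $(2/\eta)^d = 2^{\Omega(d\log(1/(\epsilon d)))}$, which is $2^{\Omega(d\log(1/\epsilon))}$ whenever $\epsilon$ is at most polynomially small in $d$ (say $\epsilon\le 1/d^2$, the regime of interest). Combining with $|\mathcal{F}|\le 2^{O(N)}$ yields $N=\Omega(d\log(1/\epsilon))$, as claimed.

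I expect the main obstacle to be the separation estimate rather than the counting: distinct weight vectors need not give functions far apart in sup norm, since sums of ReLUs can cancel, so one must restrict to a sub-family in which discrepancies cannot be hidden. The finite-difference trick above localizes any discrepancy to a single breakpoint and converts a gap in the weights into a gap in the function values, but at the cost of a $\mathrm{poly}(d)$ factor in the usable resolution $\eta$; eliminating this loss (e.g.\ via a hierarchical ``bump'' construction, or by permitting a larger but still normalized input domain so the thresholds need not be crammed into a unit interval) is where more care is needed. A secondary, purely bookkeeping point is to ensure that the input domain, weight magnitudes, and metric used for $\mathcal{F}$ are exactly those under which Theorem~\ref{thm:upperBound} is stated, so that the lower and upper bounds are directly comparable.
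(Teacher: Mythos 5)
Your proposal is correct in outline and shares the same skeleton as the paper's proof --- both are packing-versus-counting arguments in which the pruned class has at most $2^{O(\#\text{params})}$ members while the target class contains a $2\epsilon$-separated family of size $2^{\Omega(d\log(1/\epsilon))}$, and both convert the probability-$\tfrac12$ hypothesis into a deterministic covering statement by averaging over the packing. Where you genuinely diverge is in the choice of packing. The paper packs the operator-norm unit ball of $d\times d$ matrices $\mathbf{W}$ (each realized as a width-$2d$ ReLU network $[\mathbf{I},-\mathbf{I}]\sigma([\mathbf{W};-\mathbf{W}]\mathbf{x})$) and lower-bounds the packing number by a volume ratio, obtaining $(1/2\epsilon)^{d^2}$ with no polynomial loss; since a width-$s$ pruned network on $\mathbb{R}^d\to\mathbb{R}^d$ has $2sd$ edges, this gives $s=\Omega(d\log(1/\epsilon))$ for all $\epsilon<\tfrac12$, and for depth $l>2$ it yields the stronger $\Omega(d^2\log(1/\epsilon))$ parameter bound. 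Your univariate family $f_a(x)=\sum_i a_i\sigma(x-t_i)$ with the second-difference separation bound is a clean and more hands-on alternative, but it pays a $\mathrm{poly}(d)$ tax in the usable grid resolution $\eta\asymp\epsilon d$, so the conclusion only reads $\Omega(d\log(1/\epsilon))$ in the regime $\epsilon\lesssim d^{-c}$, exactly as you flag; it also requires biases (absent from the paper's architecture, fixable by a constant input coordinate) and a normalization adjustment so that the targets satisfy $\|\mathbf{W}_i\|\le 1$, which worsens the polynomial loss but not the asymptotics. In short: your route is more elementary and self-contained (no covering/packing duality or volume computation), while the paper's linear-map packing is lossless in $d$, matches the upper bound's normalization out of the box, and generalizes to a parameter-count bound for arbitrary depth.
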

To the best of our knowledge, the only prior work that proves the validity of the strong LTH is Malach et al.~\cite{MalachEtAl20}.
However, as discussed earlier, their result gives an upper bound of $\widetilde{O}(d^5l^2)$ on the required width of the random networked to be pruned, which we improve to $O(d\log(dl))$.
A concurrent and independent work by Orseau et al.~\cite{orseau2020logarithmic} also prove a version of the strong LTH, where they use a hyperbolic distribution for initialization, and require the over-parameterized network to have width $O(d^2\log(dl))$. Although Orseau et al.~\cite{orseau2020logarithmic} prove their result for the hyperbolic distribution, they conjecture that it also holds for the uniform distribution.
Theorem \ref{thm:upperBound} proves this conjecture with an improved guarantee that a $O(d\log(dl))$ wide network suffices (which is smaller by a factor of $d$).
Furthermore, our result holds for a broad class of distributions beyond the uniform distribution (see Remark~\ref{rem:1}).

\paragraph{Notation}
We use lower-case letters to represent scalars, e.g., we may use $w$ to denote the weight of a single link between two neurons. 
We use bold lower-case letters to denote vectors, for example, $\mathbf{v}, \mathbf{u}, \mathbf{v}_1, {\bf v}_2$. The $i$-th coordinate of the vector $\mathbf{v}$ is denoted as $v_i$. 
Finally, matrices are denoted by bold upper-case letters. 
For a vector $\mathbf{v}$, we use $\|\mathbf{v}\|$ to denote its $\ell_2$ norm.
If a matrix $\mathbf{W}$ has dimension $d_1 \times d_2$, we say $\mathbf{W} \in \mathbb{R}^{d_1 \times d_2}$.
The operator norm of a $d_1\times d_2$ dimensional matrix $\mathbf{M}$ is defined as $\|\mathbf{M}\|=\sup_{\mathbf{x}\in \mathbb{R}^{d_2},\|\mathbf{x}\|=1}\|\mathbf{M}\mathbf{x}\|.$
We denote the uniform distribution on $[a, b]$ by $U[a,b]$. We use $c,C$ to denote positive absolute constants, which may vary from place to place, and their exact values can be inferred from the proof details

\section{Preliminaries and Setup}  
In this work, our goal is to approximate a target network $f({\bf x})$ by pruning a larger network $g({\bf x})$, where ${\bf x} \in \mathbb{R}^{d_0}$.
The target network $f$ is a fully-connected, ReLU neural network of the following form
\begin{equation}
    f({\bf x}) = {\bf W}_l \sigma({\bf W}_{l-1}\ldots\sigma({\bf W}_{1}{\bf x})),
\end{equation}
where ${\bf W}_i$ has dimension $d_{i}\times d_{i-1}$, ${\bf x}\in\mathbb{R}^{d_0}$, and $\sigma(\cdot)$ is the ReLU activation that is,  $\sigma(x) = x\cdot {\bf 1}_{x\ge 0}$.
A second, larger network $g({\bf x})$ is of the following form
\begin{equation}
    g({\bf x}) = {\bf M}_{2l} \sigma({\bf M}_{2l-1}\ldots\sigma({\bf M}_{1}{\bf x}).
\end{equation}
Our goal is to obtain a pruned version of $g$ by eliminating weights, i.e., 
\begin{equation}\label{eq:pruningMatrices}
    \hat{g}({\bf x}) = ({\bf S}_{2l}\odot {\bf M}_{2l}) \sigma(({\bf S}_{2l-1}\odot{\bf M}_{2l-1})\ldots\sigma(({\bf S}_{1}\odot{\bf M}_{1}){\bf x})),
\end{equation}
where each ${\bf S}_i$ is a binary (pruning) matrix, with the same dimension as ${\bf M}_i$, 
and $\odot$ represents element-wise product between matrices.  
Our objective is to obtain a good approximation while controlling the size of $g(\cdot)$, i.e.,  the width of  ${\bf M}_i$'s. 
In this work we only prune neuron weights, and not entire neurons. 
We refer the reader to Malach~et al.~\cite{MalachEtAl20} for the differences between the two approaches: \textit{pruning weights} and \textit{pruning neurons}. 

We consider the case where the weight matrices $ {\bf M}_{2l},\dots, {\bf M}_1$ of $g(\cdot)$ are randomly initialized. In particular, each element of the matrices is an independent sample from $U[-1, 1]$. 

The error metric that we use is the uniform approximation over the normed-ball, i.e., $\widehat{g}$ is $\epsilon$-close to $f$ in the following sense: 
$$\max_{{\bf x}\in\mathbb{R}^{d_0}: \|{\bf x}\| \leq 1}\|f({\bf x})-\widehat{g}({\bf x})\|\le \epsilon.$$
Observe that a result of this kind can be generalized from the domain $\{\|{\bf x}\| \leq 1\}$  to an arbitrarily large radius $r$, $\{\|{\bf x}\| \leq r\}$, by scaling $\epsilon$ appropriately. 
It is necessary, though, to consider only bounded domains because ReLU neural networks are positive-homogeneous ($f(\alpha {\bf x}) =  \alpha f({\bf x})$ for $\alpha \geq 0$)  and thus, any non-zero error can be made arbitrarily large for unbounded domains.

\label{sec:notation}

\section{Lottery Tickets via Subset Sum}\label{sec:upperBound}
We now present our results for approximating a target network by pruning a sufficiently over-parameterized neural network.
 In fact, we prove that a single random, logarithmically over-parameterized neural network can be pruned to approximate any neural network of a fixed architecture, with high probability. 
 We define $\mathcal F$ to be the set of target ReLU neural networks $f$ such that (i) $f : \mathbf{R}^{d_0}\to \mathbf{R}^{d_l}$, (ii) $f$ has depth $l$ (iii) weight matrix of $i$-th layer has dimension $d_i\times d_{i-1}$ and spectral norm at most $1$.  That is, 
\begin{align}
    \mathcal{F} = \{f: f(\mathbf{x})=\mathbf{W}_l \sigma(\mathbf{W}_{l-1}\dots\sigma(\mathbf{W}_1 \mathbf{x})),\,
    \forall i \,\,\,
    \mathbf{W}_i \in \mathbb{R}^{d_i \times d_{i-1}} 
    \text{ and }   \|\mathbf{W}_i\| \leq 1 \}
\label{EqnFFamilyUp}
\end{align}
We prove that a randomly initialized neural network  
$$g(\mathbf{x})=\mathbf{M}_{2l}  \sigma(\mathbf{M}_{2l-1}\dots\sigma( \mathbf{M}_1 \mathbf{x})),$$ which has $2l$ layers and layer widths $\log \frac{ d_{i-1}d_il}{\min\{\epsilon,\delta\}}$ times the corresponding layer widths of $\mathcal{F}$, with probability $1 - \delta$, can approximate any neural network in $\mathcal{F}$ up to error $\epsilon$. 
For simplicity, we state our results for matrices with spectral norm at most $1$, but they can readily be generalized to arbitrary norm bounds.

\begin{theorem}\label{thm:upperBound}
Let $\mathcal{F}$ be as defined in Eq.~\eqref{EqnFFamilyUp}. Consider a randomly initialized $2l$-layered neural network 
$$g({\bf x})=\mathbf{M}_{2l} \sigma(\mathbf{M}_{2l-1}\dots\sigma( \mathbf{M}_1 \mathbf{x})),$$ 
where every weight is drawn from $U[-1,1]$,
$\mathbf{M}_{2i}$ has dimension 
$$d_i \times C d_{i-1}\log \frac{d_{i-1}d_i l}{\min\{\epsilon,\delta\}},$$ and $\mathbf{M}_{2i-1}$ has dimension $$C d_{i-1}\log \frac{d_{i-1}d_i l}{\min\{\epsilon,\delta\}}\times d_{i-1}.$$
Then, with probability at least $1-\delta$,
 for every $f\in\mathcal{F}$, 
\begin{align*}
 \min_{{\bf S}_i\in\{0,1\}^{d_i\times d_{i-1}}} \,\,    \sup_{\|\mathbf{x}\|\leq 1}\,\,\|f(\mathbf{x})-({\bf S}_{2l}\odot {\bf M}_{2l}) \sigma(({\bf S}_{2l-1}\odot{\bf M }_{2l-1})\ldots\sigma(({\bf S}_{1}\odot{\bf M}_{1}){\bf x})\| <\epsilon.
\end{align*}
\end{theorem}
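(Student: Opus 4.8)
The plan is to reduce the whole network approximation to a building block: approximating a single linear map $\mathbf{x}\mapsto \mathbf{W}\mathbf{x}$ (followed by a ReLU) by pruning two consecutive random layers, and then compose these blocks across all $l$ layers of the target. The core lemma I would establish first is a ``single-weight via SubsetSum'' statement phrased for ReLU units: given enough i.i.d.\ $U[-1,1]$ weights feeding through a ReLU gadget, one can prune so that the gadget computes (a good approximation to) $w^* \sigma(x)$ for any target $w^*\in[-1,1]$ and any scalar input $x$. The trick, as the introduction signals, is that $\sigma(x) = \sigma(x) - \sigma(-x)$ is not available directly, but one can route $x$ through intermediate ReLU neurons of the form $\sigma(a x)$ for random $a$, and because $\sigma$ is linear on each sign of $x$, a subset sum of the surviving contributions $\sum_{i\in S} b_i \sigma(a_i x)$ collapses to $(\sum_{i\in S} b_i a_i)\,\sigma(x)$ on $x\ge 0$ and to an analogous expression on $x<0$; with a symmetric pair of neurons (one seeing $x$, one seeing $-x$) one gets independent control of the two halves. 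Invoking Lueker's random \textsc{SubsetSum} result, $O(\log(1/\epsilon'))$ samples suffice so that some subset $S$ achieves $|w^* - \sum_{i\in S} b_i a_i| \le \epsilon'$ with failure probability $\le \epsilon'$ (or $\delta'$); the pruning matrices $\mathbf{S}_i$ are exactly the indicators of these chosen subsets.

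Next I would assemble a single target layer $\mathbf{W}_i \in \mathbb{R}^{d_i\times d_{i-1}}$. Each scalar entry $(\mathbf{W}_i)_{jk}$ needs its own independent pool of $O(\log(d_{i-1}d_i l/\min\{\epsilon,\delta\}))$ random weights in the widened layer $\mathbf{M}_{2i-1}$, so that a union bound over all $d_{i-1}d_i$ entries, all $l$ layers, keeps the total failure probability below $\delta$; this is why the width blow-up is $C d_{i-1}\log\frac{d_{i-1}d_i l}{\min\{\epsilon,\delta\}}$ and the depth doubles (one widened layer of intermediate ReLUs, one layer to recombine). After pruning, layer pair $2i-1, 2i$ of $\widehat g$ computes $\mathbf{x}\mapsto \sigma(\widetilde{\mathbf{W}}_i \mathbf{x})$ where $\widetilde{\mathbf{W}}_i$ is entrywise within $\epsilon'$ of $\mathbf{W}_i$; hence $\|\widetilde{\mathbf{W}}_i - \mathbf{W}_i\| \le \|\widetilde{\mathbf{W}}_i - \mathbf{W}_i\|_F \le \epsilon'\sqrt{d_{i-1}d_i}$. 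Choosing $\epsilon'$ a suitable $\mathrm{poly}(1/(d\,l))$ fraction of $\epsilon$ makes each layer's operator-norm error small.

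Finally I would propagate the per-layer error through the depth-$l$ composition. Using that each true weight matrix has $\|\mathbf{W}_i\|\le 1$, that the approximate matrices have $\|\widetilde{\mathbf{W}}_i\| \le 1 + \epsilon'\sqrt{d_{i-1}d_i} \le 2$ say, that $\sigma$ is $1$-Lipschitz, and that $\|\mathbf{x}\|\le 1$, a standard telescoping argument gives
\begin{align*}
\sup_{\|\mathbf{x}\|\le 1}\|f(\mathbf{x}) - \widehat g(\mathbf{x})\| \;\le\; \sum_{i=1}^{l} 2^{\,l-i}\,\|\widetilde{\mathbf{W}}_i - \mathbf{W}_i\| \;\le\; 2^{l}\, l\, \epsilon' \max_i \sqrt{d_{i-1}d_i},
\end{align*}
which is $\le \epsilon$ once $\epsilon'$ is chosen accordingly — and since $\epsilon'$ enters the width only through $\log(1/\epsilon')$, even an exponential-in-$l$ shrinkage of $\epsilon'$ costs just the stated $O(\log(dl/\min\{\epsilon,\delta\}))$ factor in width. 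The main obstacle I anticipate is the ReLU gadget itself: making the subset-sum collapse genuinely exact on both branches $x\ge 0$ and $x<0$ simultaneously (not merely on the positive part), which requires carefully pairing neurons that see $x$ and $-x$ and checking that pruning one branch does not corrupt the other, plus ensuring the intermediate ReLU layer does not clip any signal we need — i.e.\ that the relevant pre-activations stay in a range where the gadget's linearity holds. The probabilistic bookkeeping (independence of the pools, the union bound, and confirming Lueker's hypothesis that the sampling distribution dominates a scaled uniform) is the other place where care is needed, though it is routine once the gadget is correct.
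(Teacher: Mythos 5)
Your overall architecture (single-weight ReLU gadget via Lueker's random \textsc{SubsetSum}, then neuron, then layer, then composition) is the same as the paper's, but two quantitative steps as you state them do not deliver the theorem's claimed width. First, you say each scalar entry $(\mathbf{W}_i)_{jk}$ gets ``its own independent pool'' of $O(\log(\cdot))$ weights in $\mathbf{M}_{2i-1}$; with $d_i d_{i-1}$ entries this forces width $C d_i d_{i-1}\log(\cdot)$, a factor $d_i$ larger than claimed. The paper avoids this by weight sharing: the hidden block attached to input coordinate $x_j$ (with first-layer weights $\mathbf{u}_j$, pruned once into a block-diagonal pre-processing matrix) is reused by \emph{all} $d_i$ output neurons, and only the second-layer coefficients $\mathbf{v}_{i,j}$ are pruned separately per output neuron; the union bound over the $d_i d_{i-1}$ now-dependent events still goes through because it needs no independence. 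This is precisely why the paper's single-link lemma insists on pruning only the output-side weights.

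Second, your error propagation bounds $\|\widetilde{\mathbf{W}}_j\|\le 2$ and accumulates $\sum_i 2^{\,l-i}\|\widetilde{\mathbf{W}}_i-\mathbf{W}_i\|$, and you then claim that shrinking $\epsilon'$ by $2^{-l}$ ``costs just the stated $O(\log(dl/\min\{\epsilon,\delta\}))$ factor.'' It does not: $\log(2^l/\epsilon)=\Theta(l+\log(1/\epsilon))$, so your width acquires an additive $\Theta(d_{i-1}\, l)$ term, strictly worse than $C d_{i-1}\log\frac{d_{i-1}d_i l}{\min\{\epsilon,\delta\}}$ once $l$ exceeds $\log(d/\epsilon)$. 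The paper instead sets the per-layer error to $\epsilon/(2l)$, uses $\|\mathbf{W}_i\|\le 1$ to show the pruned network's intermediate activations satisfy $\|\mathbf{x}'_i\|\le(1+\epsilon/2l)^{i-1}$, and telescopes to $(1+\epsilon/2l)^l-1<e^{\epsilon/2}-1<\epsilon$; the amplification factor is $(1+\epsilon/2l)^l=O(1)$ rather than $2^l$, which is what keeps the width at a single logarithm. Your gadget construction and the uniformity over $w$ supplied by Lueker's theorem are otherwise sound (note that one must also verify that the product distribution of $b_i a_i^+$ dominates a scaled uniform, which you flag as routine and which the paper checks in its appendix).
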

Note that the above result offers a uniform approximation guarantee for all networks in $\mathcal{F}$ by only pruning a single over-parameterized network $g$. In this sense if $\mathcal{G}$ is the set of all pruned versions of the base neural network $g({\bf x})$, then our guarantee states that, with probability $1-\delta$,
\begin{align*}
 \sup_{f\in \mathcal{F}}\,\,\min_{\hat{g}({\bf x})\in\mathcal{G}}\,\, \sup_{\|\mathbf{x}\|\leq 1} \|f(\mathbf{x})-\hat{g}({\bf x})\| <\epsilon.    
\end{align*}

We note two generalizations of Theorem~\ref{thm:upperBound} in the remarks below:
\begin{remark}\label{rem:1}
Although Theorem~\ref{thm:upperBound} is stated for initialization with the uniform distribution, Leuker's result allows us to extend  it to a wide family of distributions.
Suppose $P$ is a univariate distribution that contains a uniform distribution in the following sense: there exists a distribution $Q$, a constant $\alpha \in (0,1]$, and a constant $c > 0$ such that $P = (1- \alpha)Q +  \alpha U[-c,c]$.
Then a random network initialized with $P$ also satisfies the guarantee of Theorem~\ref{thm:upperBound}, up to constants depending on $\alpha$ and $c$. 
For example, the standard normal distribution and the Laplace distribution with zero mean and unit variance satisfy this condition.

\end{remark}

\begin{remark}

Theorem~\ref{thm:upperBound} assumes that both the target network and the over-parameterized network have ReLU activations. This assumption can be relaxed to some extent. Looking at Eq.~\eqref{ineq:layer} (in the proof of Theorem \ref{thm:upperBound}, Appendix~\ref{app:UpperBd}), we see that the target network $f$ can have any activation function, as long as it is 1-Lipschitz. The over-parameterized network $g$ would still have $2l$ layers, where every odd layer would have a ReLU (or linear) activation and every even layer would have the same activation as the target network.
\end{remark}

In Subsection~\ref{SecProofSketchLinear}, we illustrate the connection between the \textsc{SubsetSum} problem and approximating a single weight via pruning by considering a linear network. Later, in Subsection~\ref{subsec:singleWeight}, we show how to do the same by pruning a ReLU network. Towards the end of Subsection \ref{subsec:singleWeight}, we outline a proof sketch of Theorem~\ref{thm:upperBound}. The complete proof of Theorem~\ref{thm:upperBound} can be found in Appendix \ref{app:upperBoundProof}.

\subsection{Single Link: Pruning a Linear Network}
\label{SecProofSketchLinear}
We now explain our approximation scheme for a single link weight by pruning a random two-layered \textit{linear} neural network. 
Let the scalar target function be $f(x) = w\cdot x$, where $|w| \leq 0.5$.
To make the task simpler, let us assume (just for this subsection) that the second layer is deterministic, and has weights that are all equal to 1. 
Thus, the over-parameterized neural network $g(\cdot)$, that we will prune, has the following linear architecture: 
$$    g(x) =  \mathbf{1}^T \mathbf{a} x = \littlesum_{i=1}^n a_i x,$$
where $\mathbf{1}$ is all-ones vector, $\mathbf{a} = [a_1,\dots,a_n]^T$, and  the weights $a_i$ are sampled from $U[-1,1]$. Figure \ref{fig:1} shows a visual representation of this  network.

\begin{figure}[ht]
    \centering
    \begin{subfigure}{0.5\textwidth}
        \centering
        \includegraphics[width=0.8\textwidth]{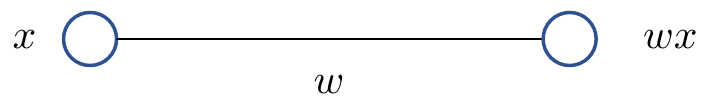}
        \caption{Target weight}
        \includegraphics[width=\textwidth]{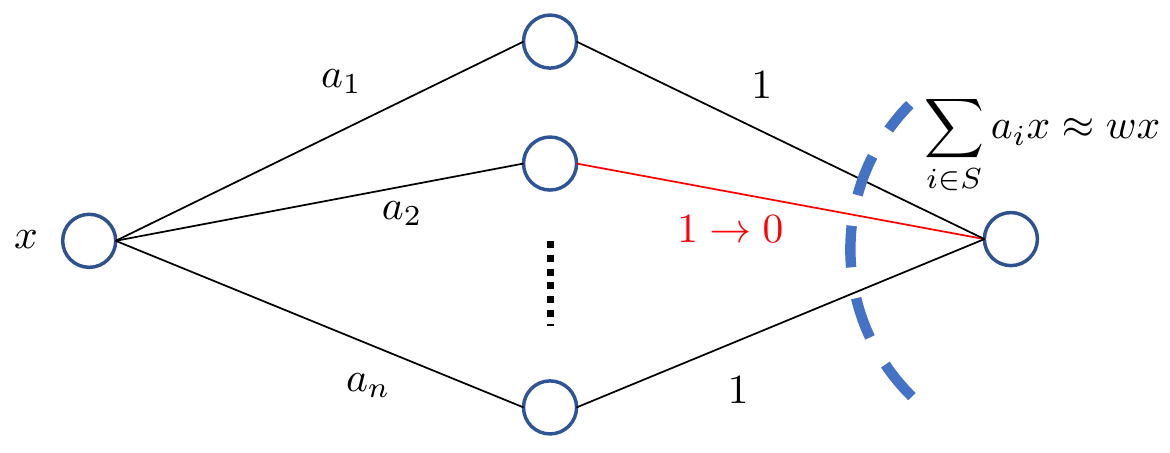}
        \caption{Over-parameterized linear network}
        \label{fig:1}
    \end{subfigure}%
    ~ 
    \begin{subfigure}{0.5\textwidth}
        \centering
        \includegraphics[width=0.9\textwidth]{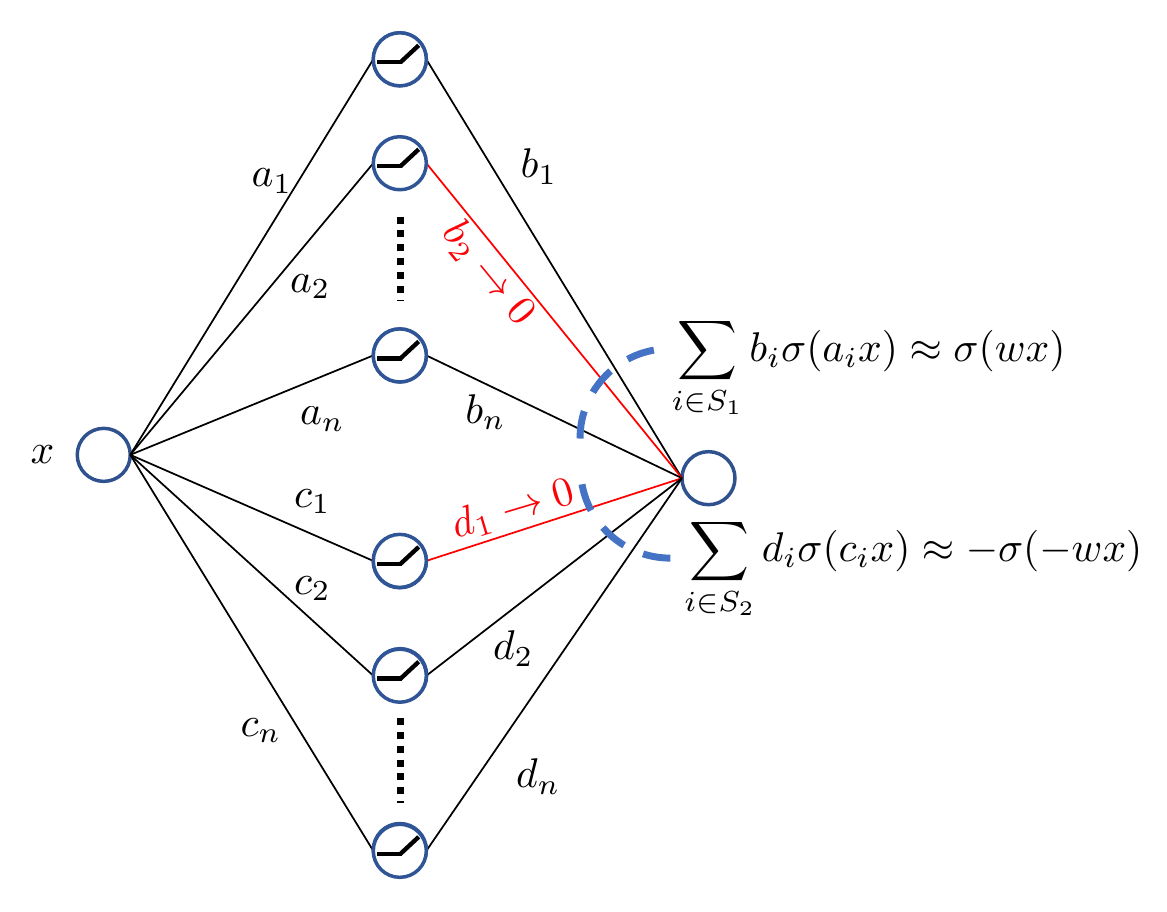}
        \caption{Over-parameterized ReLU network}
        \label{fig:approxw}
    \end{subfigure}
   \caption{(a) Target network is a linear univariate $wx$.
   (b) A simplified construction, where we approximate the weight $w$ by pruning an over-parameterized two-layered linear network with deterministic second layer of all $1$s. We only prune the second layer by setting weights to $0$.
   (c) Our construction from the proof of Lemma~\ref{lem:weight} that uses ReLU units.
  Our construction adds an additional hidden layer of size $2n = O(\log(1/\epsilon)).$
  Without loss of generality, assume $w\geq 0$.
    As the hidden layer also has ReLU activation, we make use of the identity $wx = \sigma(wx) - \sigma(-wx)$ and approximate the two terms separately. We prune the network such that the (i) top half of the network $(a_i,b_i)$ approximates $\sigma(wx)$, (ii) and the bottom half $(c_i,d_i)$ approximates $-\sigma(-wx)$ .
  We only prune the weights in the second layer (shown in red) for a technical reason which helps us reuse the weights in the first layer subsequently in the proof of Theorem \ref{thm:upperBound}.
  }
\end{figure}

Then, the question is how large does $n$ (the width of the random network) need to be so that we can approximate $wx$ by pruning weights in $\mathbf{a}$. 
As $|x| \leq 1$, we see that it is equivalent to following:
\begin{equation}
    \mathbb{P}\Big(\exists S\subseteq [n]: \big|w-\littlesum_{i\in S}a_i\big|\leq \epsilon\Big) \geq 1-\delta,
    \label{EqProofSketch}
\end{equation}
where the probability is taken over the randomness in $a_i$'s. Note that this condition is tightly related to a random instance of the subset sum problem, i.e., $\min_{S\subseteq \{1,\ldots, n\}}\left|w-\sum_{i\in S}a_i\right|.$
This problem was studied by Lueker~\cite{Lueker98}, who obtains the following result:
\begin{inftheorem}(Corollary 2.5 \cite{Lueker98})
\label{cor:1}
Let $X_1, \dots ,X_n$ be i.i.d. uniform over $[-1,1]$, where $n \geq C\log \frac{2}{\min\{\epsilon,\delta\}}$. Then, with probability at least $1-\delta$, we have 
\begin{align*}
    \forall z \in [-0.5, 0.5],  \qquad \exists S \subset [n] \text{ such that } |z - \littlesum_{i \in S}X_i| \leq \epsilon. 
\end{align*}
\end{inftheorem}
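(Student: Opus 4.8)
The plan is to derive this exactly as the random \textsc{SubsetSum} covering statement of Lueker~\cite{Lueker98}, by combining a ``net plus union bound'' reduction with the exponential miss-probability estimate that sits at its core.

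First I would reduce to a single target: it suffices to show that for every \emph{fixed} $z\in[-1/2,1/2]$,
\[ \Pr_{X_1,\dots,X_n}\!\Big[\,\forall S\subseteq[n]:\ \big|z-\textstyle\sum_{i\in S}X_i\big|>\epsilon/2\,\Big]\ \le\ 2^{-cn} \]
for an absolute constant $c>0$. Given this, take the $O(1/\epsilon)$ points $z_j$ spaced $\epsilon/2$ apart in $[-1/2,1/2]$; if every $z_j$ has a subset sum within $\epsilon/2$, then every $z$ in the interval has one within $\epsilon$, so a union bound gives total failure probability $\le(C/\epsilon)2^{-cn}$, which is $\le\delta$ as soon as $n\ge\frac1c\log\frac{C}{\epsilon\delta}=O\!\big(\log\frac{2}{\min\{\epsilon,\delta\}}\big)$, using $\log\frac1\epsilon+\log\frac1\delta\le 2\log\frac1{\min\{\epsilon,\delta\}}$; rescaling $\epsilon/2\mapsto\epsilon$ recovers the stated constants.

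Next I would prove the exponential miss bound by a contraction argument. Fix $z$, write $V_k$ for the set of subset sums of $X_1,\dots,X_k$, and $D_k=\mathrm{dist}(z,V_k)$, so $D_0=|z|\le1/2$ and $D_k$ is nonincreasing. The key step is a one-step lemma: there is an absolute $p\in(0,1)$ such that, conditioned on $X_1,\dots,X_{k-1}$, with probability $\ge p$ one has $D_k\le\max\{\tfrac12 D_{k-1},\,\epsilon/2\}$. The mechanism is that $D_k\le|(z-v)-X_k|$ for every $v\in V_{k-1}$, so it is enough for the fresh sample $X_k$ to fall within $\tfrac12 D_{k-1}$ of the shifted set $z-V_{k-1}$; once the subset sums occupy a constant-length window at resolution comparable to $D_{k-1}$, this thickened target set covers a constant fraction of $[-1,1]$, so the probability is a constant \emph{independent of the scale} $D_{k-1}$ — precisely the feature the naive ``add one tiny sample'' scheme lacks, since a single $U[-1,1]$ sample lands within $D_{k-1}$ of a fixed point with probability only $\Theta(D_{k-1})$. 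Granting the lemma, $D_n\le\epsilon$ needs only $J=O(\log(1/\epsilon))$ successful steps out of $n$, each succeeding (given the past) with probability $\ge p$, so a Chernoff bound for the negative-binomial waiting time makes $n=CJ=O(\log(1/\epsilon))$ steps enough except with probability $2^{-\Omega(n)}$, which is the bound claimed above.

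The hard part will be the one-step contraction lemma, which is the technical heart of~\cite{Lueker98}: one must carry the right invariant — not just that $D_{k-1}$ is small, but that the subset sums already produced are spread out over a constant-length window at resolution $\sim D_{k-1}$, so that shifting one of them by a fresh sample has a constant chance of landing near $z$ — and show this invariant is built up in the first few steps and then preserved, which is delicate because ``well-spread'' is itself the property being established. An essentially equivalent route instead tracks the Lebesgue measure of the $\epsilon$-neighbourhood of $V_k$ near the origin and shows it grows by a constant factor per sample until it saturates; the same scale-independence of the growth rate is what makes it go through, and an analogous net argument then upgrades ``most of $[-1/2,1/2]$ is covered'' to ``all of it is''. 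Everything else — the net, the union bound, and the Chernoff boosting — is routine.
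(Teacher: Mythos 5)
The paper does not prove this statement at all: it is imported verbatim as Corollary~2.5 of Lueker~\cite{Lueker98}, and the text explicitly defers to Lueker's ``beautiful and intricate proof that employs concentration of martingales.'' So there is no in-paper argument to compare yours against; your sketch has to stand on its own, and as it stands it has a genuine gap. The outer architecture is fine --- an $\epsilon/2$-net of $[-1/2,1/2]$, a union bound over its $O(1/\epsilon)$ points, and a per-target miss probability of $2^{-cn}$ once $n \geq C\log(1/\epsilon)$ do combine to give the stated bound. The problem is the one-step contraction lemma, which you assert but do not prove, and which is \emph{false} in the form you state it: conditioned on an arbitrary history $X_1,\dots,X_{k-1}$, it is not true that $D_k \leq \max\{\tfrac12 D_{k-1},\epsilon/2\}$ with probability bounded below by an absolute constant. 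If $V_{k-1}$ consists of only a few points clustered near the origin (e.g.\ after unlucky early draws), the $\tfrac12 D_{k-1}$-thickening of $z-V_{k-1}$ has measure $\Theta(D_{k-1})$ inside $[-1,1]$, so the fresh sample helps only with probability $\Theta(D_{k-1})$ --- exactly the scale-dependent rate you correctly say must be avoided. The lemma only holds under the additional hypothesis that the subset sums already produced are spread over a constant-length window at resolution $\sim D_{k-1}$, and establishing that this invariant is created and then maintained is not a technicality one can wave at: it is the entire content of Lueker's argument (in his version, showing that the Lebesgue measure of the $\epsilon$-covered set grows by a constant factor per sample \emph{in conditional expectation}, and then running a supermartingale concentration bound to convert that into a high-probability geometric decay of the uncovered measure).

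In short, you have correctly reverse-engineered the shape of the proof and honestly flagged where the difficulty lives, but the proposal reduces the theorem to its hardest sub-claim without closing it; the Chernoff/negative-binomial boosting you describe cannot even be set up until the per-step success probability is shown to be a scale-free constant, which is precisely what is missing. Given that the paper treats this as a black-box citation, the appropriate fix is either to do the same, or to actually carry out the measure-growth-plus-martingale argument rather than the pointwise $D_k$ recursion, since the former is the version for which the ``constant-factor improvement per sample'' statement is provable.
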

Lueker in \cite{Lueker98}, establishes this theorem by a beautiful and intricate proof that employs concentration of martingales, 
and
is crucial for the proof of our upper bound.
Using Theorem \ref{cor:1}, we see that if $n\geq  C \log (1/ \min\{\epsilon,\delta\})$, then Eq.~\eqref{EqProofSketch} holds. 
We can prune the network by simply setting $a_i$ to $0$ for $i \notin S$. 
Equivalently, we could instead prune the output layer at indices that are not in $S$, as shown in Figure~\ref{fig:1}.
Note that the dependence on $\epsilon$ is only logarithmic, which leads to only logarithmic dependence on width 
in Theorem~\ref{thm:upperBound}.

\subsection{Single Link: Pruning a ReLU Network}\label{subsec:singleWeight}

We now show how the ideas from Subsection~\ref{SecProofSketchLinear} can be extended to random ReLU networks. 
\begin{lemma}(Approximating a weight)\label{lem:weight}
Let $g: \mathbb{R} \to \mathbb{R}$ be a randomly initialized network of the form $g(x) = \mathbf{v}^T \sigma(\mathbf{u} x)$,
 where $\mathbf{v}, \mathbf{u}\in \mathbb{R}^{2n}$, $n \geq C \log \frac{2}{\epsilon}$, and all $v_i$, $u_i$'s are drawn i.i.d. from $U[-1,1]$.
Then with probability at least $1 - \epsilon$, 
\begin{equation*}
 \forall {w\in [-1,1]}, \quad \exists \quad { \mathbf{s}  \in \{0,1\}^{2n} }: \sup_{x:|x|\leq 1} |wx-(\mathbf{v} \odot \mathbf{s} )^T \sigma(\mathbf{u}x)| <\epsilon.
\end{equation*} 
\end{lemma}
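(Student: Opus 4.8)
The plan is to reduce the function-level claim ``$(\mathbf v\odot\mathbf s)^{T}\sigma(\mathbf u x)$ approximates $wx$ uniformly on $\{|x|\le 1\}$'' to two independent \emph{scalar} \textsc{SubsetSum} instances, exploiting positive-homogeneity of $\sigma$. First I would record the elementary identities $\sigma(ux)=u\,\sigma(x)$ for $u\ge 0$, $\sigma(ux)=-u\,\sigma(-x)$ for $u\le 0$, and $wx=w\,\sigma(x)-w\,\sigma(-x)$ valid for all $w,x\in\real$. Let $A^{+}=\{i\in[2n]:u_i>0\}$ and $A^{-}=\{i\in[2n]:u_i<0\}$; these are disjoint. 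If the mask $\mathbf s=\mathbf{1}_{S}$ has support $S=S^{+}\sqcup S^{-}$ with $S^{+}\subseteq A^{+}$, $S^{-}\subseteq A^{-}$, then the identities give
\[
(\mathbf v\odot\mathbf{1}_{S})^{T}\sigma(\mathbf u x)=\Bigl(\sum_{i\in S^{+}}u_iv_i\Bigr)\sigma(x)-\Bigl(\sum_{i\in S^{-}}u_iv_i\Bigr)\sigma(-x).
\]
As for each fixed $x$ at most one of $\sigma(x),\sigma(-x)$ is nonzero and both lie in $[0,1]$ when $|x|\le 1$, it suffices to pick $S^{+},S^{-}$ with $\bigl|\sum_{i\in S^{+}}u_iv_i-w\bigr|\le\epsilon/2$ and $\bigl|\sum_{i\in S^{-}}u_iv_i-w\bigr|\le\epsilon/2$, which yields $\sup_{|x|\le1}|wx-(\mathbf v\odot\mathbf s)^{T}\sigma(\mathbf u x)|\le\epsilon/2<\epsilon$. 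Observe that only second-layer weights are zeroed (every $u_i$ is kept), matching the form in the statement.

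Second, I would verify that the scalars entering these two \textsc{SubsetSum} instances meet Lueker's hypotheses. Conditioned on the index set $A^{+}$, the numbers $\{u_iv_i:i\in A^{+}\}$ are i.i.d., each a product of an independent $U[0,1]$ and $U[-1,1]$; its density on $[-1,1]$ equals $\tfrac12\ln(1/|t|)$, hence is bounded below by a positive absolute constant on a fixed interval $[-c,c]$. So this product law can be written as $(1-\alpha)Q+\alpha\,U[-c,c]$ with $\alpha\in(0,1]$, $c\in(0,1]$ absolute constants, and the distributional extension of Lueker's result noted in Remark~\ref{rem:1} applies: a batch of $\Omega(\log(1/\min\{\epsilon,\delta\}))$ such numbers covers every target in a fixed interval $[-\gamma,\gamma]$, $\gamma$ an absolute constant, to accuracy $\epsilon$ except with probability $\delta$. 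The same holds for $\{u_iv_i:i\in A^{-}\}$ (where $u_i<0$, so $u_iv_i=-|u_i|v_i$ follows the mirror law). Since $|w|$ may exceed $\gamma$, I would split $w=\sum_{j=1}^{m_0}(w/m_0)$ with $m_0=\lceil1/\gamma\rceil$ and $|w/m_0|\le\gamma$, partition $A^{+}$ (resp.\ $A^{-}$) into $m_0$ consecutive blocks, solve one \textsc{SubsetSum} instance per block to accuracy $\epsilon/(2m_0)$, and take the union of the chosen subsets; since $m_0,\alpha,c,\gamma$ are absolute constants, a block of size $\gtrsim n/m_0$ is enough, which holds once $n\ge C\log(2/\epsilon)$ with $C$ large.

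Third, I would assemble the probabilities. A Chernoff bound gives $|A^{+}|,|A^{-}|\ge n/2$ except with probability $2e^{-\Omega(n)}\le\epsilon/4$ (using $n\ge C\log(2/\epsilon)$). Conditioned on that, applying the extended Lueker bound to each of the $2m_0$ blocks with per-block failure probability $\epsilon/(8m_0)$ and taking a union bound, all blocks have the required covering property except with probability $\epsilon/4$. Off these two events the construction above produces, for \emph{every} $w\in[-1,1]$ simultaneously (the ``for all $w$'' being inherited from the ``for all targets'' already present in Theorem~\ref{cor:1}, so no union bound over $w$ is needed), a mask $\mathbf s$ with $\sup_{|x|\le1}|wx-(\mathbf v\odot\mathbf s)^{T}\sigma(\mathbf u x)|\le\epsilon/2<\epsilon$. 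The overall failure probability is at most $\epsilon/4+\epsilon/4<\epsilon$.

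The step I expect to be the main obstacle is the second one: choosing the right positive-probability event to condition on (here $u_i>0$, and implicitly the ``uniform part'' of the product law) so that the retained weights are genuinely i.i.d.\ draws from a distribution to which Lueker's theorem applies, and then bridging the gap between Lueker's bounded target range $[-\gamma,\gamma]$ and the full range $w\in[-1,1]$ via the constant-factor splitting---all while pruning only the second layer. The ReLU bookkeeping and the Chernoff/union-bound accounting are routine once this decomposition into two \textsc{SubsetSum} instances is in place.
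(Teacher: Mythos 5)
Your proposal is correct and follows essentially the same route as the paper: decompose $wx$ into the $\sigma(x)$ and $\sigma(-x)$ parts, turn the problem into random \textsc{SubsetSum} over the products $u_iv_i$, show the product-of-uniforms law contains a uniform component, and invoke Lueker's theorem (which gives the ``for all $w$'' uniformity for free). The only differences are in bookkeeping: the paper deterministically splits the $2n$ hidden units into two halves and absorbs the sign-pruning into a mixture distribution with an atom at $0$ (so no Chernoff bound is needed), and it cites a version of Lueker's result covering targets in all of $[-1,1]$ directly, whereas you condition on the realized signs of the $u_i$ and add a constant-factor range-splitting device --- both of which are valid.
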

\begin{proof}

We decompose $wx$ as $wx=\sigma(wx)-\sigma(-wx)$ and WLOG assume $w\geq 0$. Let
\begin{equation*}
    \mathbf{v}=\begin{bmatrix}
           \mathbf{b} \\
           \mathbf{d}
         \end{bmatrix},
    \mathbf{u}=\begin{bmatrix}
           \mathbf{a} \\
           \mathbf{c}
         \end{bmatrix},
             \mathbf{s}=\begin{bmatrix}
           \mathbf{s}_1 \\
           \mathbf{s}_2
         \end{bmatrix},
\end{equation*}
where $\mathbf{a},\mathbf{b},\mathbf{c},\mathbf{d}\in \mathbb{R}^{n}$, $\mathbf{s}_1,\mathbf{s}_2\in \{0,1\}^{n}$.
Thus, $(\mathbf{v} \odot \mathbf{s} )^T \sigma(\mathbf{u}x) = (\mathbf{b} \odot \mathbf{s}_1 )^T \sigma(\mathbf{a}x) + (\mathbf{d} \odot \mathbf{s}_2 )^T \sigma(\mathbf{c}x)$. 
See Figure~\ref{fig:approxw} for a diagram.

\textbf{Step 0: Equivalence between pruning $\mathbf{u}$ and $\mathbf{v}$.}
Note that $(\mathbf{v} \odot \mathbf{s} )^T \sigma(\mathbf{u}x)=\mathbf{v}^T \sigma((\mathbf{s}\odot\mathbf{u})x)$.
Thus, in the the following construction, we will prune $\mathbf{u}$ ($\mathbf{a}$ and $\mathbf{c}$) instead of $\mathbf{v}$, for simplicity. 

\textbf{Step 1: Pre-processing $a$.}
Let $\mathbf{a}^+=\max\{\mathbf{0},\mathbf{a}\}$ be the vector obtained by pruning all the negative entries of $\mathbf{a}$. Thus, $\mathbf{a}^+$ contains $n$ i.i.d. random variables from the mixture distribution: $(1/2)P_0 + (1/2)U[0,1]$, where $P_0$ is the degenerate distribution at $0$. 

Since $w \geq 0$, then for $x\leq 0$ we have that $\sigma(wx)=\mathbf{b}^T\sigma(\mathbf{a^+}x) =0  $.
Moreover, further pruning of $\mathbf{a^+}$ would not affect this equality for $x \leq 0$.
We thus focus our attention on $x>0$ in steps 1 and 2.
Therefore, we get that $\sigma(wx)=wx$ and  $\mathbf{b}^T\sigma(\mathbf{a}^+x)=\mathbf{b}^T\mathbf{a}^+x = \sum_i b_ia_i^+x$.

\textbf{Step 2: Pruning $a$ via \textsc{SubsetSum}.}
Consider the random variable $Z_i = b_i a_i^+$.
We show that Theorem~\ref{cor:1} also holds for $Z_i$'s (See Corollary \ref{cor:prodUnif2} and Corollary~\ref{cor:LuekerCor} in Appendix \ref{app:subsetcor99432374}).
Therefore, as long as $n\geq C \log 2/ \epsilon$, with probability $1 - \epsilon/2 $, we can choose a subset of $\{Z_1,Z_2,\dots,Z_n\}$ to approximate $w$ up to  $\epsilon$. 
That is with probability $1- \epsilon/2$, 
\begin{equation*}
\forall w \in [0,1], \quad  \exists \mathbf{s}_1 \in \{0,1\}^{n  }: \quad |w-  \mathbf{b}^T (\mathbf{s}_1 \odot \mathbf{a}^+)|< \epsilon/2,
\end{equation*}
and because $|x|\leq 1$, we can uniformly bound the error between the linear function $wx$ and the pruned version of the network. Therefore, with probability $1 - \epsilon/2$, 
\begin{align}
\forall w \in [0,1], \quad  \exists \mathbf{s}_1 \in \{0,1\}^{n  }: \quad \sup_{x \in [-1,1]}| \sigma(wx)-  \mathbf{b}^T  \sigma(    (\mathbf{s}_1 \odot \mathbf{a}^+)x )|< \epsilon/2,
\label{EqAppSingleWeightPos}
\end{align}
where we use that for $x< 0$,  $\sigma( wx)= \mathbf{b}^T\sigma((\mathbf{s}_1 \cdot \mathbf{a}^+))x = 0$.

\textbf{Step 3: Pre-processing ${\bf c}$.}
We now turn our attention to $x \leq 0$ with a similar procedure as steps 1 and 2.
We begin by pre-processing $\mathbf{c}$.
Let $\mathbf{c^-}=\min\{\mathbf{0},\mathbf{c}\}$ be the vector obtained by pruning the positive entries of $\mathbf{c}$. Therefore,  $\mathbf{c}^-$ contains $n$ i.i.d. random variables from the mixture distribution: $(1/2)P_0 + (1/2)U[-1,0]$, where $P_0$ is the degenerate distribution at $0$.

\textbf{Step 4: Pruning ${\bf c}$ via \textsc{SubsetSum}.}
For $x \geq 0$, we have that $\sigma(-wx)=0$ and $\mathbf{d}^T\sigma(\mathbf{c}^-x)=0$. 
Moreover, pruning $\mathbf{c}^-$ further does not affect the equality.
Thus, we only consider the case $x<0$.

For $x < 0$, we get that $-\sigma(-wx)=-(-wx)=wx$ and also $\sigma(\mathbf{c}^-x)=\mathbf{c}^-x$.
Thus $\mathbf{d}^T\sigma(\mathbf{c}^-x)=(\mathbf{d}^T\mathbf{c}^-)x$.
Observe that the $b_ia_i^+$ and $d_ic^-_i$ have the exact same distribution and thus similar to the step $2$ above, as long as $n\geq C \log 2/ \epsilon $, with probability at least $1- \epsilon/2$, 
\begin{align}
\forall w \in [0,1], \quad  \exists \mathbf{s}_2 \in \{0,1\}^{n }:\quad  \sup_{x \in [-1,1]} | - \sigma(-wx)-  \mathbf{d}^T \sigma( (\mathbf{s}_2 \odot \mathbf{c^-})x)|<\epsilon/2,
\label{EqAppSingleWeightNeg}
\end{align}
where we use that $wx = -\sigma(-wx)$ and $\mathbf{d}^T  (\mathbf{s}_2 \odot \mathbf{c^-})x = \mathbf{d}^T \sigma( (\mathbf{s}_2 \odot \mathbf{c^-})x)$ for $x \leq 0$.

\textbf{Step 5: Tying it all together.}
Recall that $w \geq 0$. By a union bound, we assume that Equations~\eqref{EqAppSingleWeightPos} and~\eqref{EqAppSingleWeightNeg} hold with probability at least $1 - \epsilon$. On that event, we get that
\begin{align*}
    \inf_{\mathbf{s} \in \{0,1\}^{2n} }&  \sup_{|x| \leq 1} |wx - \mathbf{v}^T\sigma((\mathbf{u} \odot \mathbf{s})x) | = 
    \inf_{\mathbf{s}_1, \mathbf{s}_2} \sup_{|x| \leq 1} |wx - \mathbf{b}^T\sigma( (\mathbf{s}_1 \odot \mathbf{a})x) -  \mathbf{d}^T\sigma( ( \mathbf{s}_2 \odot \mathbf{c} )x)  | \\
    &\leq \inf_{\mathbf{s}_1, \mathbf{s}_2}\sup_{|x| \leq 1} |wx - \mathbf{b}^T\sigma( (\mathbf{s}_1 \odot \mathbf{a}^+)x) -  \mathbf{d}^T\sigma( ( \mathbf{s}_2 \odot \mathbf{c}^-)x)  |
    \tag{Using steps 1 and 3}
    \\
    &= \inf_{\mathbf{s}_1, \mathbf{s}_2 } \sup_{|x| \leq 1}| \sigma(wx) - \sigma(-wx) - \mathbf{b}^T\sigma( (\mathbf{s}_1 \odot \mathbf{a}^+)x) -  \mathbf{d}^T\sigma( ( \mathbf{s}_2 \odot \mathbf{c}^-)x)  | \\
    &\leq\inf_{\mathbf{s}_1}\sup_{|x| \leq 1}|\sigma(wx) - \mathbf{b}^T\sigma((\mathbf{s}_1 \odot \mathbf{a}^+)x)| + \inf_{ \mathbf{s}_2}\sup_{|x| \leq 1}|-\sigma(-wx) -\mathbf{d}^T\sigma(( \mathbf{s}_2 \odot \mathbf{c}^-)x)| \\
    &\leq \epsilon,
\end{align*}
where the last step uses \eqref{EqAppSingleWeightPos} and~\eqref{EqAppSingleWeightNeg}. 
\end{proof}

\paragraph{Proof Sketch of Theorem~\ref{thm:upperBound}}
Lemma~\ref{lem:weight} states that we can approximate a single link with a $O(\log(1/\epsilon))$ wide network. Then we show that we can approximate a neuron with $d$ weights using $O(d\log(d/\epsilon))$ wide network.
Moreover, reusing weights allows us to approximate a whole layer of $d$ neurons with a $O(d\log(d/\epsilon))$ wide network.  
Finally, we show that if we can approximate each layer in the target network individually, 
we also get a good approximation as a whole.

\section{Lower Bound by Parameter Counting}
\label{sec:lowerBound}
We now state the lower bound for the required over-parameterization by showing that even approximating a linear network requires blow up of width by  $\log(1/\epsilon)$.
For a matrix $\mathbf{W}$, we can express the linear function $\mathbf{W}\mathbf{x}$ as a ReLU network $h_{\mathbf{W}}$ (see Eq.~\eqref{eq:LinearLower}). Let $\mathcal{F}$ be the set of neural networks, that represent linear functions with spectral norm at most $1$, i.e., 
\begin{equation}
 \mathcal{F} := \{h_\mathbf{W} : \mathbf{W} \in \mathbf R^{d \times d}: \|\mathbf{W}\|\leq 1\}, \quad \text  {   where   } \quad h_\mathbf{W}(\mathbf{x}) =    \begin{bmatrix}
    \mathbf{I} & -\mathbf{I}
\end{bmatrix}
    \sigma\Big(\begin{bmatrix}
          \mathbf{W} \\
           -\mathbf{W}
         \end{bmatrix} \mathbf{x}\Big).
\label{eq:LinearLower}
\end{equation}

We prove that if a random network (with arbitrary distribution) approximates every $h_W \in \mathcal F$ with probability at least $0.5$, then the random network needs at least $d^2\log( 1/\epsilon)$ parameters.
For a two-layered network, this means that the width must be at least $ d\log( 1/\epsilon)$.
Note that our lower bound does not require a uniform approximation over $\mathcal{F}$, which is achieved by Theorem~\ref{thm:upperBound} with $d\log( d/\epsilon)$ width. 
Therefore, Theorem~\ref{thm:lowerBoundStrong} shows a lower bound to the version of the \textit{lottery ticket hypothesis} considered by Malach et al.~\cite{MalachEtAl20}, whereas Theorem~\ref{thm:upperBound} shows an upper bound for a stronger version of the \textit{LTH}.
Formally, we have the following theorem:
\begin{theorem}\label{thm:lowerBoundStrong} 
Consider a neural network, $g: \mathbb{R}^d \to \mathbb{R}^d$ of the form
$g(\mathbf{x})=\mathbf{M}_l \sigma(\mathbf{M}_{l-1}\dots\sigma( \mathbf{M}_1 \mathbf{x}))$, 
with arbitrary distributions on $\mathbf{M}_1, \dots, \mathbf{M}_{l}$.
Let $\mathcal{G}$ be the set of neural networks that can be formed by pruning $g$, i.e.,
$    \mathcal{G}:= \left\{ \widehat{g}: \widehat{g}(\mathbf{x}) = ({\bf S}_l \odot \mathbf{M}_l) \sigma( \dots\sigma( ({\bf S}_1 \odot \mathbf{M}_1) \mathbf{x})) , \text{ where } \bf{S_1},\dots \bf{S_l}  \text{ are pruning matrices} \right\}$.
Let $\mathcal{F}$ be as defined in Eq.~\eqref{eq:LinearLower}. If the following statement  holds:
\begin{align}
\forall {h\in \mathcal{F}} , \mathbb{P}\left( \exists g'\in \mathcal{G}: \sup_{\mathbf{x}:\|\mathbf{x}\|\leq 1} \|h(\mathbf{x})-g'(\mathbf{x})\| <\epsilon\right) \geq \frac{1}{2} ,
\label{eq:lowerBdApprox}
\end{align}
then $g(\mathbf{x})$ has $\Omega(d^2 \log(1/ \epsilon))$ parameters. Further, if $l= 2$, then width of $g(\mathbf{x})$ is  $\Omega(d \log(1/ \epsilon))$.
\end{theorem}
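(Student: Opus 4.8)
The plan is to use an information-theoretic / covering-number argument. The key observation is that the family $\mathcal{G}$ of pruned subnetworks, although possibly very large, is finite once the random weights $\mathbf{M}_1,\dots,\mathbf{M}_l$ are fixed: if $g$ has $N$ total parameters then $|\mathcal{G}| \le 2^{N}$. On the other hand, the target family $\mathcal{F} = \{h_{\mathbf{W}} : \|\mathbf{W}\|\le 1\}$ is a continuum, and I claim it contains a large $2\epsilon$-separated set in the uniform-over-the-ball metric $d(h,h') = \sup_{\|\mathbf{x}\|\le 1}\|h(\mathbf{x})-h'(\mathbf{x})\|$. Indeed, for linear maps this metric is just the operator norm $\|\mathbf{W}-\mathbf{W}'\|$ (up to the fixed constant factor coming from the $[\mathbf{I}\ -\mathbf{I}]$ / stacking construction, which only affects constants). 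A standard volumetric bound shows that the operator-norm ball of $d\times d$ matrices of radius $1$ contains a $(4\epsilon)$-separated subset $\mathcal{W}$ of size at least $(c/\epsilon)^{d^2}$ for some absolute constant $c$; equivalently $\log|\mathcal{W}| \ge c' d^2 \log(1/\epsilon)$. I would get this by a packing argument — e.g. pack the ball with respect to the (stronger) Frobenius norm, since Frobenius-separation implies operator-norm-separation only in one direction, so instead I would pack in operator norm directly using that the ball of radius $1$ in operator norm contains the ball of radius $1/\sqrt{d}$ in Frobenius norm, giving $\log|\mathcal{W}| \ge c' d^2\log(1/(\epsilon\sqrt d)) = \Omega(d^2\log(1/\epsilon))$ once $\epsilon$ is polynomially small in $d$ (and for larger $\epsilon$ the bound is trivial).

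Now suppose, towards a contradiction, that $g$ has $N = o(d^2\log(1/\epsilon))$ parameters, so $|\mathcal{G}| \le 2^N$. For each $h_{\mathbf{W}} \in \mathcal{W}$, by hypothesis \eqref{eq:lowerBdApprox} the event $E_{\mathbf{W}} = \{\exists g'\in\mathcal{G} : d(h_{\mathbf{W}},g') < \epsilon\}$ has probability at least $1/2$. Hence $\E\,|\{\mathbf{W}\in\mathcal{W} : E_{\mathbf{W}} \text{ holds}\}| \ge |\mathcal{W}|/2$, so there is a fixed realization of the random weights for which at least $|\mathcal{W}|/2$ of the targets in $\mathcal{W}$ are $\epsilon$-approximated by some element of $\mathcal{G}$. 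But since $\mathcal{W}$ is $4\epsilon$-separated, by the triangle inequality no single $g'\in\mathcal{G}$ can be within $\epsilon$ of two distinct elements of $\mathcal{W}$ (their mutual distance would be $< 2\epsilon < 4\epsilon$, contradiction). Therefore the map sending each $\epsilon$-approximated $\mathbf{W}$ to an approximating $g'$ is injective, forcing $|\mathcal{G}| \ge |\mathcal{W}|/2$, i.e. $2^N \ge \tfrac12 (c/\epsilon)^{d^2}$, which gives $N = \Omega(d^2\log(1/\epsilon))$ — the desired contradiction. Finally, for $l=2$, a network $\mathbf{M}_2\sigma(\mathbf{M}_1\mathbf{x})$ with $\mathbf{M}_1 \in \mathbb{R}^{m\times d}$, $\mathbf{M}_2\in\mathbb{R}^{d\times m}$ has $N = 2md$ parameters, so $N = \Omega(d^2\log(1/\epsilon))$ forces the hidden width $m = \Omega(d\log(1/\epsilon))$.

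The main obstacle I anticipate is making the packing step clean: I need a $c\epsilon$-separated set that is genuinely separated in the operator-norm metric (which is what the approximation guarantee controls, via the $h_{\mathbf{W}}$ construction), yet still of size $\exp(\Omega(d^2\log(1/\epsilon)))$. A naive Frobenius-norm packing is separated in Frobenius norm but not obviously in operator norm, and conversely an operator-norm packing directly has the right separation but one must check its cardinality is still $\exp(\Omega(d^2\log(1/\epsilon)))$ — this is fine because the operator-norm unit ball has nonempty interior in $\mathbb{R}^{d^2}$ (it contains a Frobenius ball of radius $1/\sqrt d$), so the standard volume-ratio bound yields a packing number $\ge (1/(3\epsilon\sqrt d))^{d^2}$, which is $\exp(\Omega(d^2\log(1/\epsilon)))$ provided $\epsilon \le d^{-1/2-\eta}$ for a constant $\eta>0$; in the complementary regime the target lower bound is absorbed into the constant. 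A secondary technical point is tracking the constant relating $d(h_{\mathbf{W}},h_{\mathbf{W}'})$ to $\|\mathbf{W}-\mathbf{W}'\|$ through the $[\mathbf{I}\ -\mathbf{I}]$ stacking, but this is a fixed absolute constant and only rescales $\epsilon$.
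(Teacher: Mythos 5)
Your proposal is correct in substance and follows essentially the same route as the paper: bound $|\mathcal{G}|$ by $2^{N}$, exhibit a large $\epsilon$-separated family of linear targets, use the hypothesis plus an averaging/first-moment argument to find a single realization approximating half of them, and conclude via the "one pruned network approximates at most one separated target" triangle inequality. The one place you over-complicate is the packing step: there is no need to pass through the Frobenius ball and no $\sqrt{d}$ loss. For \emph{any} norm on $\mathbb{R}^{d^2}$, the $2\epsilon$-ball is a dilate of the unit ball, so the volume ratio $\mathrm{Vol}(\{\|\mathbf{W}\|\leq 1\})/\mathrm{Vol}(\{\|\mathbf{W}\|\leq 2\epsilon\})$ equals $(2\epsilon)^{-d^2}$ exactly (one only needs the operator-norm ball to have positive finite Lebesgue volume, which it does since it is sandwiched between Frobenius balls); this gives a $2\epsilon$-separated set of size $(1/2\epsilon)^{d^2}$ with the separation already in the operator norm, which is exactly the metric induced by $\sup_{\|\mathbf{x}\|\leq 1}\|h_{\mathbf{W}}(\mathbf{x})-h_{\mathbf{W}'}(\mathbf{x})\|$ since $h_{\mathbf{W}}(\mathbf{x})=\mathbf{W}\mathbf{x}$ identically (so your worried-about stacking constant is exactly $1$). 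This removes the restriction $\epsilon\leq d^{-1/2-\eta}$ entirely; as written, your claim that the complementary regime is "absorbed into the constant" is not actually justified by your argument, so you should make this replacement rather than leave that regime hanging.
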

\begin{remark}
Note that Theorem~\ref{thm:lowerBoundStrong} focuses on approximating (linear) multivariate functions, which is an important building block in approximating multilayer neural networks. At the same time, our bounds are tight only for networks with constant depth. 
Generalizing Theorem~\ref{thm:lowerBoundStrong} for deeper networks would require a better understanding of the increase in representation power of neural networks with depth.
We leave further investigation of both of these questions for future work. 
\end{remark}
\paragraph{Proof Sketch:} 
Our proof strategy is a counting argument, which shows that $|\mathcal G|$ has to be large deterministically. 
Particularly, there exists a matrix $\bf W$ that is far from all the pruned networks in $\mathcal{G}$ with high probability, unless $|\mathcal{G}|$ is large enough. Together with the fact that $|\mathcal{G}|$ scales with the number of parameters, we get the desired lower bound. See Appendix~\ref{app:lowerBoundProof} for the complete proof.

\section{Experiments}
\label{sec:experiments}
We verify our results empirically by approximating a target network via \textsc{SubsetSum} in Experiment 1, and by pruning a sufficiently over-parameterized neural network that implements the structures in Figures \ref{fig:1} and \ref{fig:approxw} in Experiment 2. In both setups, we benchmark on the MNIST \cite{lecun98gradient} dataset, and all training and pruning is accomplished with cosine annealing learning rate decay \cite{loshchilov16sgdr} on a batch size $64$ with momentum $0.9$ and weight decay $0.0005$.

\paragraph{Experiment 1: \textsc{SubsetSum}.}\label{exp:1}
We approximate a two-layer, 500 hidden node target network with a final test set accuracy of 97.19\%. Every weight was approximated in this network with a subset sum of $n = C \log_2(\frac{1}{\epsilon}) \approx 21$ coefficients, for $\epsilon = 0.01$ and $C = 3$. To solve \textsc{SubsetSum} more efficiently, we implement the following mixed integer program (MIP) for every weight $w$ in the target network and solve it using Gurobi's \cite{gurobi} MIP solver:
{\small
\begin{align*}
    \min_{x_1, \ldots, x_n} \quad \left| w - \sum_{i=1}^{n} a_i x_i \right|\qquad
    \text{subject to} \quad &\left| w - \sum_{i=1}^{n} a_i x_i \right| \leq \epsilon, \quad x_i \in \{0, 1\} \; \forall i=1,\ldots, n \; , \\ & a_i \in [l, u] \; \forall i=1,\ldots, n,
\end{align*}
}
where $l$ and $u$ are the bounds of the uniform coefficient distribution and $\epsilon$ is sufficiently large. Every set of $a_i$ coefficients is unique to the approximation of $w$, and these coefficients are drawn uniformly from a range which is fine-tuned for the target network. We recommend that the set $[l, u]$ be large enough to at least contain all weights in the target network, making the solution to any \textsc{SubsetSum} problem less likely to be infeasible. When the bounds are close to the minimum and maximum weight values in the network, we find that we can decrease $n$, either by increasing $\epsilon$ or decreasing $C$, thereby reducing time complexity. Since the solver finds the optimal solution, most weights in our approximated network were well below $0.01$ error, and our approximated network maintained the 97.19\% test set accuracy. The $397,000$ weights in our target network were approximated with $3,725,871$ coefficients in $21.5$ hours on $36$ cores of a c5.18xlarge AWS EC2 instance. Such a running time is attributed to solving many instances of this nontrivial combinatorial problem.

\paragraph{Experiment 2: Pruning Random Networks.}
We train baseline networks, including two-layer and four-layer fully connected networks with $500$ hidden nodes per layer (learning rate $0.1$ for $10$ epochs) and LeNet5 (learning rate $0.01$ for $50$ epochs). In Figure 4, we show the result of implementing the structure in Figure \ref{fig:approxw} (with and without ReLU activation) in each of these networks, and compare the result with their respective baselines and wide-network counterparts. More specifically, we compare the results of pruning our structure with pruning a wider, random network such that the number of parameters is approximately equal to the number of parameters in the network with our structure. We use the \texttt{edge-popup} \cite{RamanujanEtAl20} algorithm to prune the networks, which finds a subnetwork in each of these two architectures without training the weights. Since our structure in Figure \ref{fig:approxw} is well-defined only for fully connected layers, we prune LeNet5 by randomly initializing and freezing the convolutional filters and pruning the fully connected layers. The weights and scores in the pruned networks are initialized with a Kaiming Normal \cite{he15delving} and Kaiming Uniform distribution, respectively. The LeNet5 networks are pruned with learning rate $0.01$; the fully connected networks utilize a learning rate of $0.1$. Experiments are run on these pruned architectures for $10, 20, 30, 50,$ and $100$ epochs, and the maximum accuracy for each architecture for a particular number of parameters is then selected. We vary the number of parameters in each network by adjusting the $[0, 1]$ sparsity parameter.

\begin{figure}[ht]
    \centering
    \begin{subfigure}{0.32\textwidth}
        \centering
        \includegraphics[width=\textwidth]{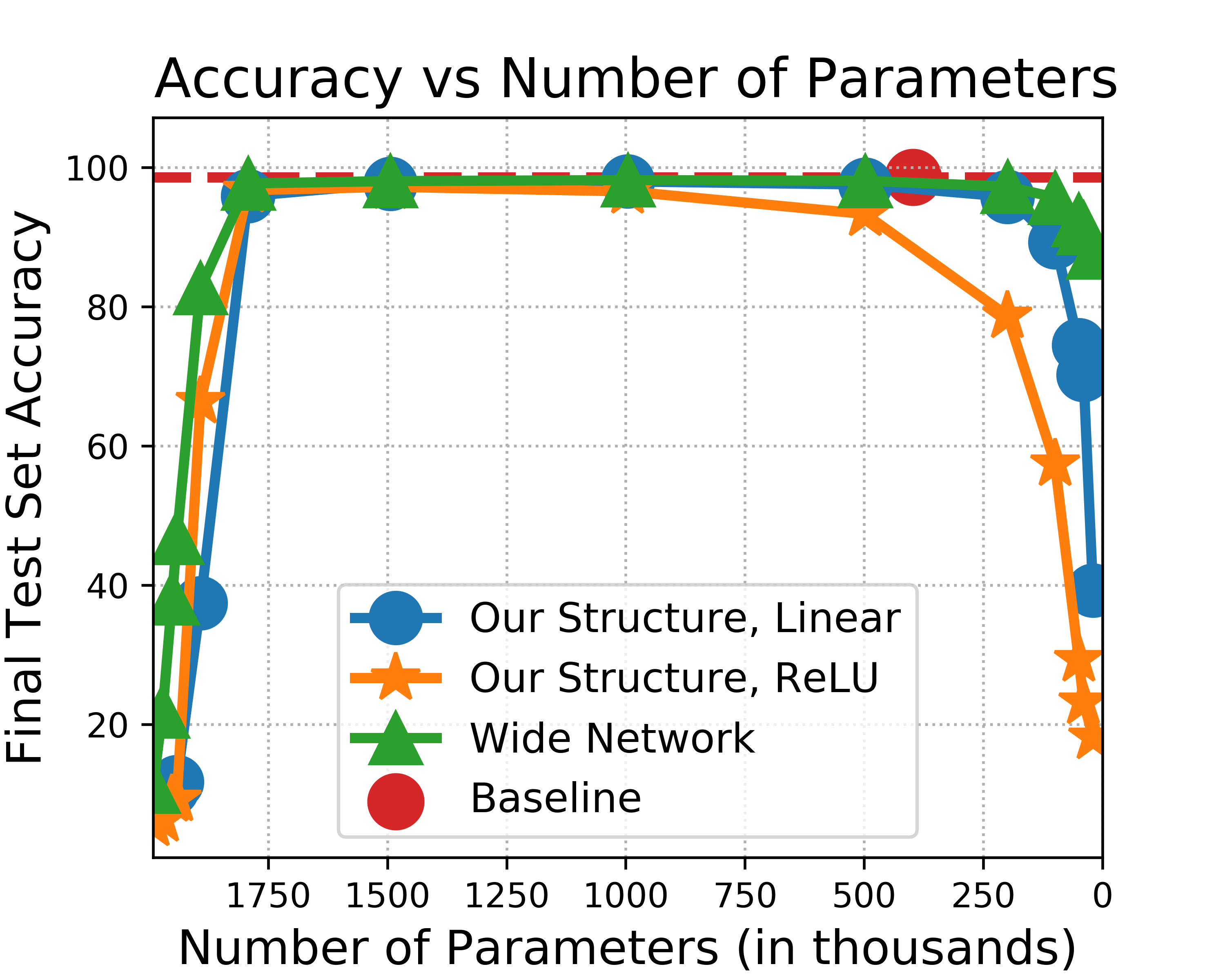}
        \caption{Two-Layer Fully Connected}
    \end{subfigure}
    ~
    \begin{subfigure}{0.32\textwidth}
        \centering
        \includegraphics[width=\textwidth]{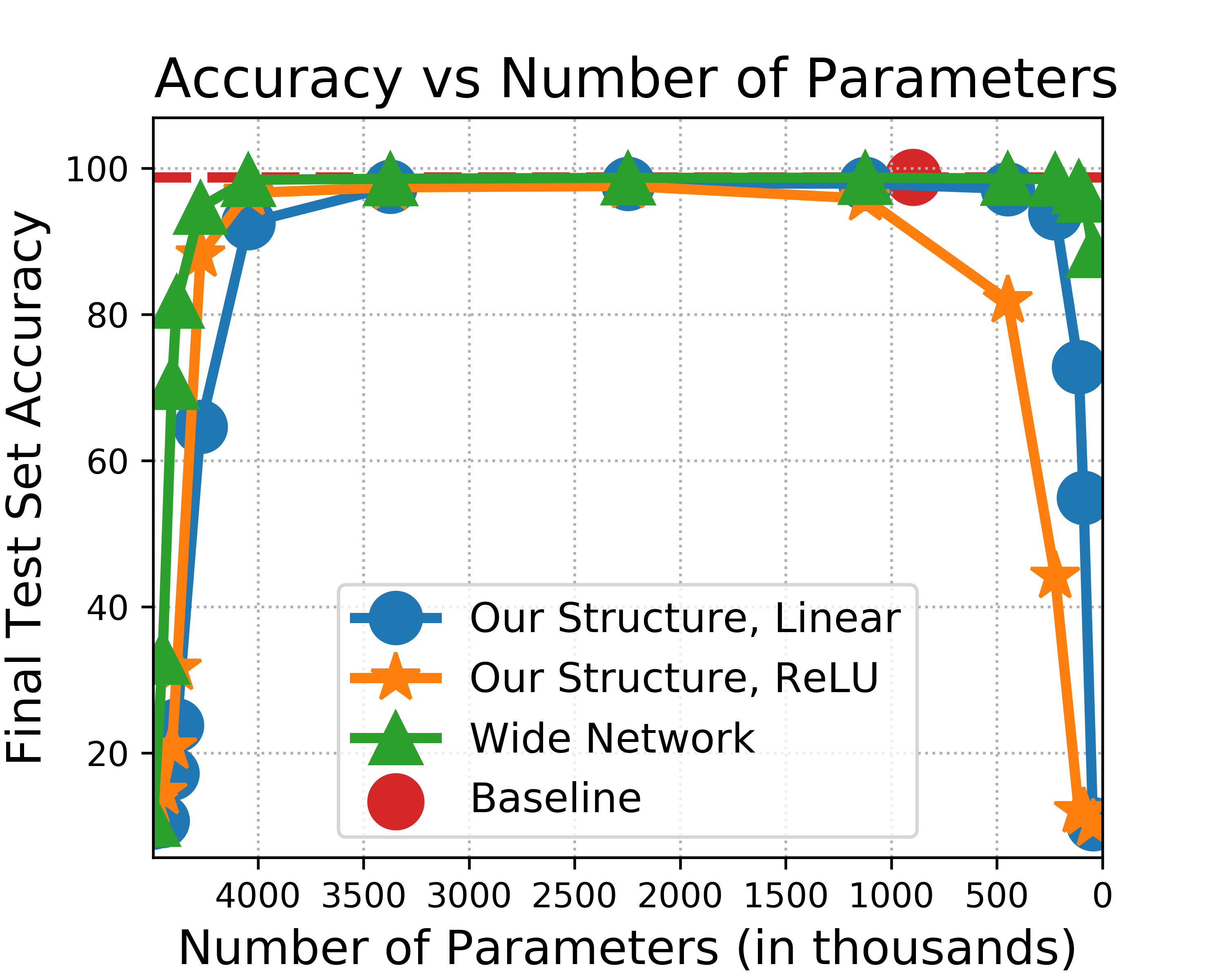}
        \caption{Four-Layer Fully Connected}
    \end{subfigure}
    ~
    \begin{subfigure}{0.32\textwidth}
        \centering
        \includegraphics[width=\textwidth]{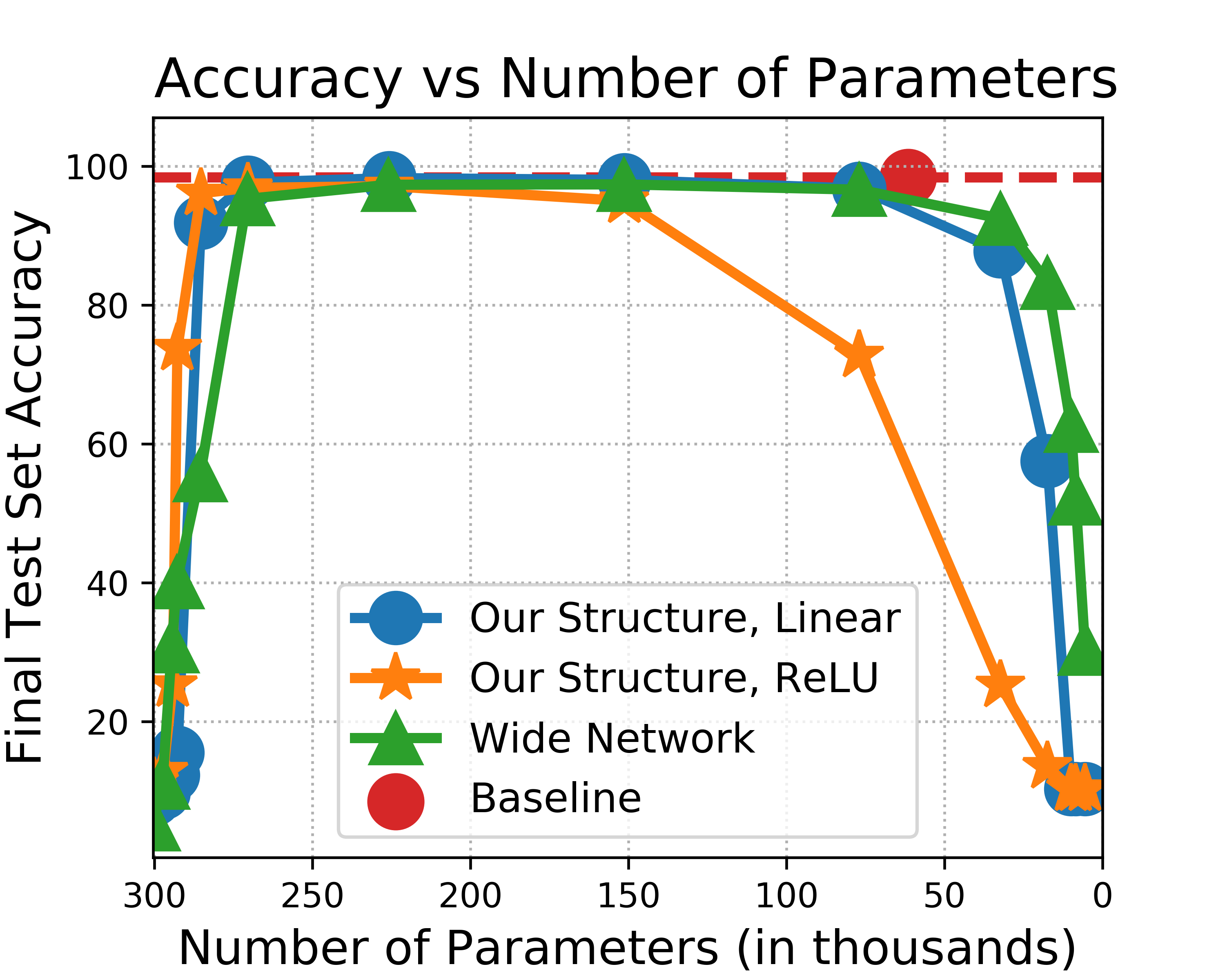}
        \caption{LeNet5}
    \end{subfigure}
    \caption{Performance of pruning our structure shown in Figure \ref{fig:approxw}, with and without ReLU activation, using $5$ $(a_i, b_i)$ coefficient pairs per weight. The number of parameters is a function of the sparsity of each network. Results are benchmarked on MNIST.}
    \label{fig:exp2}
\end{figure}

Note that the use of additional ReLU activations leads to worse performance than the use of an identity mapping when the number of parameters is small. We posit that the additional sparsification from ReLU degrades the performance of our already sparsified network.

The findings from this second experiment indicate that the performance of a pruning algorithm can vary with regards to its approximation capacity of the target network, depending on the network topology. Note that the different network topologies all contain, approximately, the same number of weights, hence should lead to similar approximations. 
However we see that the choice of architecture plays an important role in the performance and efficiency of a given pruning algorithm.

\vspace{-0.2cm}
\section{Discussion}
\vspace{-0.2cm}
\label{sec:conclusion}
In this paper we establish a tight version of the strong lottery ticket hypothesis: there always exist subnetworks of randomly initialized over-parameterized networks that can come close to the accuracy of a target network; further this can be achieved by random networks that are only a logarithmic factor wider than the original network. Our results are enabled by a very interesting paper on the random subset sum problem from Lueker \cite{Lueker98}, and the essential building block of our analysis is to show that a linear function $f(x)=\sum_{i=1}^d w_i x_i$ on $d$ parameters, can be approximated by selecting a subset of the coefficients of a random one that has $d\log(dl/\epsilon)$ parameters.

Our current work focuses on general fully connected networks. It would be interesting to extend the results to convolutional neural networks. Other interesting structures that come up in neural networks are sparsity and low-rank weight matrices. This leads to the question of whether we can leverage the additional structure in the target network to improve our results.
An interesting question from a computational point of view is whether our analysis gives insights to improve the existing pruning algorithms~\cite{RamanujanEtAl20}.
As remarked in Malach et al.~\cite{MalachEtAl20}, the strong LTH implies that pruning an over-parameterized network to obtain good accuracy is NP-Hard in the worst case.
It is an interesting future direction to find efficient algorithms for pruning which provably work under mild assumptions on the data.

\section*{Broader Impact}

As discussed in the Introduction,  our results establish that we can ``train'' a neural network by only pruning a slightly larger network.
As shown in Strubell et al.~\cite{strubell2019energy}, training a single deep model (including hyper-parameter optimization and experimentation) has the carbon footprint equivalent to that of four cars through their lifetime, motivating the search for a more efficient training algorithm.
Pruning is a radically different way of optimization as compared to the usual gradient based one, and recent works show that either using it alone or in tandem with conventional optimization techniques can lead to good performance~\cite{FrankleCarbin18,WangEtAl19,RamanujanEtAl20}. 
Moreover, a sparse network is also useful when the models are deployed for inference. One of the major benefits of pruning is that sparser models can have smaller memory and computational requirements, leading in some cases to less energy consumption. 
As a result, pruned networks are useful in resource-constrained settings and have smaller carbon footprint.
Nonetheless, the pruning algorithms, if proved to be successful, will need to go through the same scrutiny as the existing optimization algorithms such as robustness to adversarial examples~ \cite{cosentino2019search}. 

Another contribution, which is more subtle, is that we connect the \textsc{SubsetSum} problem with pruning of neural networks and their optimization in general. We believe that both these fields can benefit from the existing literature of the \textsc{SubsetSum} problem~\cite{Karp72,KarmarkarEtAl86,Lueker82,Lueker98}. We use the result by Lueker~\cite{Lueker98} which says that we only need a logarithmic sized set of random numbers on a bounded domain to approximate any number in that domain with high accuracy and high probability. It would be interesting to see the Machine Learning community apply these kind of results to other theoretical and practical problems.

\begin{ack}
DP wants to thank Costis Daskalakis and Alex Dimakis for early discussions of the problem during NeurIPS2019 in Vancouver, BC.

This research is supported by an NSF CAREER Award \#1844951, a Sony Faculty Innovation Award, an AFOSR \& AFRL Center of Excellence Award FA9550-18-1-0166, and an NSF TRIPODS Award \#1740707.
\end{ack}
\bibliographystyle{unsrt}
\bibliography{ref}

\begin{thebibliography}{10}

\bibitem{MalachEtAl20}
Eran Malach, Gilad Yehudai, Shai {Shalev-Shwartz}, and Ohad Shamir.
\newblock Proving the {{Lottery Ticket Hypothesis}}: {{Pruning}} is {{All You
  Need}}.
\newblock February 2020.

\bibitem{deng2020model}
Lei Deng, Guoqi Li, Song Han, Luping Shi, and Yuan Xie.
\newblock Model compression and hardware acceleration for neural networks: A
  comprehensive survey.
\newblock {\em Proceedings of the IEEE}, 2020.

\bibitem{han2015learning}
Song Han, Jeff Pool, John Tran, and William Dally.
\newblock Learning both weights and connections for efficient neural network.
\newblock In {\em Advances in neural information processing systems}, pages
  1135--1143, 2015.

\bibitem{HanEtAl16}
Song Han, Huizi Mao, and William~J. Dally.
\newblock Deep {{Compression}}: {{Compressing Deep Neural Networks}} with
  {{Pruning}}, {{Trained Quantization}} and {{Huffman Coding}}.
\newblock {\em arXiv:1510.00149 [cs]}, February 2016.

\bibitem{li2016pruning}
Hao Li, Asim Kadav, Igor Durdanovic, Hanan Samet, and Hans~Peter Graf.
\newblock Pruning filters for efficient convnets.
\newblock {\em arXiv preprint arXiv:1608.08710}, 2016.

\bibitem{wen2016learning}
Wei Wen, Chunpeng Wu, Yandan Wang, Yiran Chen, and Hai Li.
\newblock Learning structured sparsity in deep neural networks.
\newblock In {\em Advances in neural information processing systems}, pages
  2074--2082, 2016.

\bibitem{hubara2016binarized}
Itay Hubara, Matthieu Courbariaux, Daniel Soudry, Ran El-Yaniv, and Yoshua
  Bengio.
\newblock Binarized neural networks.
\newblock In {\em Advances in neural information processing systems}, pages
  4107--4115, 2016.

\bibitem{hubara2017quantized}
Itay Hubara, Matthieu Courbariaux, Daniel Soudry, Ran El-Yaniv, and Yoshua
  Bengio.
\newblock Quantized neural networks: Training neural networks with low
  precision weights and activations.
\newblock {\em The Journal of Machine Learning Research}, 18(1):6869--6898,
  2017.

\bibitem{he2017channel}
Yihui He, Xiangyu Zhang, and Jian Sun.
\newblock Channel pruning for accelerating very deep neural networks.
\newblock In {\em Proceedings of the IEEE International Conference on Computer
  Vision}, pages 1389--1397, 2017.

\bibitem{wu2016quantized}
Jiaxiang Wu, Cong Leng, Yuhang Wang, Qinghao Hu, and Jian Cheng.
\newblock Quantized convolutional neural networks for mobile devices.
\newblock In {\em Proceedings of the IEEE Conference on Computer Vision and
  Pattern Recognition}, pages 4820--4828, 2016.

\bibitem{zhu2016trained}
Chenzhuo Zhu, Song Han, Huizi Mao, and William~J Dally.
\newblock Trained ternary quantization.
\newblock {\em arXiv preprint arXiv:1612.01064}, 2016.

\bibitem{he2018amc}
Yihui He, Ji~Lin, Zhijian Liu, Hanrui Wang, Li-Jia Li, and Song Han.
\newblock Amc: Automl for model compression and acceleration on mobile devices.
\newblock In {\em Proceedings of the European Conference on Computer Vision
  (ECCV)}, pages 784--800, 2018.

\bibitem{zhu2017prune}
Michael Zhu and Suyog Gupta.
\newblock To prune, or not to prune: exploring the efficacy of pruning for
  model compression.
\newblock {\em arXiv preprint arXiv:1710.01878}, 2017.

\bibitem{ChengEtAl19}
Yu~Cheng, Duo Wang, Pan Zhou, and Tao Zhang.
\newblock A {{Survey}} of {{Model Compression}} and {{Acceleration}} for {{Deep
  Neural Networks}}.
\newblock {\em arXiv:1710.09282 [cs]}, September 2019.

\bibitem{mlsys2020_73}
Davis Blalock, Jose~Javier Gonzalez~Ortiz, Jonathan Frankle, and John Guttag.
\newblock What is the state of neural network pruning?
\newblock In {\em Proceedings of Machine Learning and Systems 2020}, pages
  129--146. 2020.

\bibitem{HassibiStork93}
Babak Hassibi and David~G. Stork.
\newblock Second order derivatives for network pruning: {{Optimal Brain
  Surgeon}}.
\newblock In S.~J. Hanson, J.~D. Cowan, and C.~L. Giles, editors, {\em Advances
  in {{Neural Information Processing Systems}} 5}, pages 164--171.
  {Morgan-Kaufmann}, 1993.

\bibitem{LevinEtAl94}
Asriel~U. Levin, Todd~K. Leen, and John~E. Moody.
\newblock Fast {{Pruning Using Principal Components}}.
\newblock In J.~D. Cowan, G.~Tesauro, and J.~Alspector, editors, {\em Advances
  in {{Neural Information Processing Systems}} 6}, pages 35--42.
  {Morgan-Kaufmann}, 1994.

\bibitem{MozerSmolensky89}
Michael~C Mozer and Paul Smolensky.
\newblock Skeletonization: {{A Technique}} for {{Trimming}} the {{Fat}} from a
  {{Network}} via {{Relevance Assessment}}.
\newblock In D.~S. Touretzky, editor, {\em Advances in {{Neural Information
  Processing Systems}} 1}, pages 107--115. {Morgan-Kaufmann}, 1989.

\bibitem{LeCunEtAl90}
Yann LeCun, John~S. Denker, and Sara~A. Solla.
\newblock Optimal {{Brain Damage}}.
\newblock In D.~S. Touretzky, editor, {\em Advances in {{Neural Information
  Processing Systems}} 2}, pages 598--605. {Morgan-Kaufmann}, 1990.

\bibitem{FrankleCarbin18}
Jonathan Frankle and Michael Carbin.
\newblock The {{Lottery Ticket Hypothesis}}: {{Finding Sparse}}, {{Trainable
  Neural Networks}}.
\newblock In {\em International {{Conference}} on {{Learning
  Representations}}}, September 2018.

\bibitem{ZhouEtAl20}
Hattie Zhou, Janice Lan, Rosanne Liu, and Jason Yosinski.
\newblock Deconstructing {{Lottery Tickets}}: {{Zeros}}, {{Signs}}, and the
  {{Supermask}}.
\newblock {\em arXiv:1905.01067 [cs, stat]}, March 2020.

\bibitem{FrankleEtAl20}
Jonathan Frankle, Gintare~Karolina Dziugaite, Daniel~M. Roy, and Michael
  Carbin.
\newblock Linear {{Mode Connectivity}} and the {{Lottery Ticket Hypothesis}}.
\newblock {\em arXiv:1912.05671 [cs, stat]}, February 2020.

\bibitem{cosentino2019search}
Justin Cosentino, Federico Zaiter, Dan Pei, and Jun Zhu.
\newblock The search for sparse, robust neural networks, 2019.

\bibitem{8852405}
R.~V. {Soelen} and J.~W. {Sheppard}.
\newblock Using winning lottery tickets in transfer learning for convolutional
  neural networks.
\newblock In {\em 2019 International Joint Conference on Neural Networks
  (IJCNN)}, pages 1--8, 2019.

\bibitem{sabatelli2020transferability}
Matthia Sabatelli, Mike Kestemont, and Pierre Geurts.
\newblock On the transferability of winning tickets in non-natural image
  datasets.
\newblock {\em arXiv preprint arXiv:2005.05232}, 2020.

\bibitem{RamanujanEtAl20}
Vivek Ramanujan, Mitchell Wortsman, Aniruddha Kembhavi, Ali Farhadi, and
  Mohammad Rastegari.
\newblock What's {{Hidden}} in a {{Randomly Weighted Neural Network}}?
\newblock {\em arXiv:1911.13299 [cs]}, March 2020.

\bibitem{WangEtAl19}
Yulong Wang, Xiaolu Zhang, Lingxi Xie, Jun Zhou, Hang Su, Bo~Zhang, and Xiaolin
  Hu.
\newblock Pruning from {{Scratch}}.
\newblock {\em arXiv:1909.12579 [cs]}, September 2019.

\bibitem{Karp72}
Richard~M. Karp.
\newblock Reducibility among {{Combinatorial Problems}}.
\newblock In Raymond~E. Miller, James~W. Thatcher, and Jean~D. Bohlinger,
  editors, {\em Complexity of {{Computer Computations}}: {{Proceedings}} of a
  Symposium on the {{Complexity}} of {{Computer Computations}}, Held {{March}}
  20\textendash{}22, 1972.}, The {{IBM Research Symposia Series}}, pages
  85--103. {Springer US}, {Boston, MA}, 1972.

\bibitem{KarmarkarEtAl86}
Narendra Karmarkar, Richard~M. Karp, George~S. Lueker, and Andrew~M. Odlyzko.
\newblock Probabilistic {{Analysis}} of {{Optimum Partitioning}}.
\newblock {\em Journal of Applied Probability}, 23(3):626--645, 1986.

\bibitem{Lueker82}
George~S. Lueker.
\newblock On the {{Average Difference}} between the {{Solutions}} to {{Linear}}
  and {{Integer Knapsack Problems}}.
\newblock In Ralph~L. Disney and Teunis~J. Ott, editors, {\em Applied
  {{Probability}}-{{Computer Science}}: {{The Interface Volume}} 1}, Progress
  in {{Computer Science}}, pages 489--504. {Birkh{\"a}user}, {Boston, MA},
  1982.

\bibitem{Lueker98}
George~S. Lueker.
\newblock Exponentially small bounds on the expected optimum of the partition
  and subset sum problems.
\newblock {\em Random Structures \& Algorithms}, 12(1):51--62, 1998.

\bibitem{orseau2020logarithmic}
Laurent Orseau, Marcus Hutter, and Omar Rivasplata.
\newblock Logarithmic pruning is all you need, 2020.

\bibitem{lecun98gradient}
Y.~{Lecun}, L.~{Bottou}, Y.~{Bengio}, and P.~{Haffner}.
\newblock Gradient-based learning applied to document recognition.
\newblock {\em Proceedings of the IEEE}, 86(11):2278--2324, 1998.

\bibitem{loshchilov16sgdr}
Ilya Loshchilov and Frank Hutter.
\newblock {SGDR:} stochastic gradient descent with restarts.
\newblock {\em CoRR}, abs/1608.03983, 2016.

\bibitem{gurobi}
LLC Gurobi~Optimization.
\newblock Gurobi optimizer reference manual, 2020.

\bibitem{he15delving}
Kaiming He, Xiangyu Zhang, Shaoqing Ren, and Jian Sun.
\newblock Delving deep into rectifiers: Surpassing human-level performance on
  imagenet classification.
\newblock {\em CoRR}, abs/1502.01852, 2015.

\bibitem{strubell2019energy}
Emma Strubell, Ananya Ganesh, and Andrew McCallum.
\newblock Energy and policy considerations for deep learning in nlp.
\newblock In {\em Proceedings of the 57th Annual Meeting of the Association for
  Computational Linguistics}, pages 3645--3650, 2019.

\bibitem{Vershynin18}
Roman Vershynin.
\newblock {\em High-Dimensional Probability: An Introduction with Applications
  in Data Science}.
\newblock Cambridge Series in Statistical and Probabilistic Mathematics.
  Cambridge University Press, 2018.

\end{thebibliography}
\newpage
\appendix

\section{Proof of the upper bound}
\label{app:upperBoundProof}

\paragraph{Complete proof of the Theorem~\ref{thm:upperBound}} In the following subsections, we hierarchically build the construction for our proof of Theorem~\ref{thm:upperBound}. 
We have shown how we approximate a single weight in Subsection \ref{subsec:singleWeight}. 
This first step is slightly different than the sketch above, in the sense that we approximate a single weight with a ReLU random network, rather than a linear one.
We then approximate a single ReLU neuron in Subsection \ref{subsec:singleNeuron}, and a single layer in Subsection \ref{subsec:singleLayer}. Finally, we approximate the whole network in Subsection~\ref{app:UpperBd}, which completes the proof of Theorem~\ref{thm:upperBound}.

\subsection{Approximating a single neuron}\label{subsec:singleNeuron}

In this subsection we prove the following lemma on approximating a (univariate)  linear function $\mathbf{w}^T\mathbf{x}$, which highlights the main idea in approximating a (multivariate) linear function $\mathbf{Wx}$ (see Lemma~\ref{lem:layer3}  in Subsection \ref{subsec:singleLayer}).
\begin{lemma}\label{lem:approxNeuron}(Approximating a univariate linear function)
Consider a randomly initialized neural network $g(\mathbf{x})=   \mathbf{v}^T \sigma(\mathbf{M} \mathbf{x})$ with $\mathbf{x}\in \mathbb{R}^d$ such that $\mathbf{M}\in \mathbb{R}^{ Cd \log \frac{d}{\epsilon} \times d}$ and $\mathbf{v} \in \mathbb{R}^{Cd \log \frac{d}{\epsilon}}$, where each weight is initialized independently from the distribution $U[-1,1]$.

Let $\widehat{g}(x) =  (\mathbf{s}\odot \mathbf{v})^T \sigma ( (\mathbf{T} \odot \mathbf{ M}) \bf x ) $ be the pruned network for a choice of binary vector $\mathbf{s}$ and matrix $ {\bf T}$. 
If $f_\mathbf{w}(\mathbf{x})=\mathbf{w}^T\mathbf{x}$ be the linear function,
then with probability at least $ 1 - \epsilon$,
\begin{equation*}
\forall     \mathbf{w}: \|\mathbf{w}\|_\infty \leq 1, \exists \quad  \mathbf{s},\mathbf{T}: \quad  \sup_{\mathbf{x}:\|\mathbf{x}\|_\infty\leq 1} \|f_{\mathbf{w}}(\mathbf{x})-  \widehat{g}(\mathbf{x})\| <\epsilon.
\end{equation*} 
\end{lemma}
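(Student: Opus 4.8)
The plan is to reduce the multivariate case to $d$ independent applications of Lemma~\ref{lem:weight}, one per coordinate of $\mathbf{w}$, and then stitch them together. Concretely, I would partition the $Cd\log\frac{d}{\epsilon}$ hidden units of $g$ into $d$ disjoint blocks, each of size $2n$ with $n = C'\log\frac{2d}{\epsilon}$ (adjusting constants so the total width matches). I assign block $j$ to the task of approximating the scalar map $x_j \mapsto w_j x_j$. To make this work through the matrix $\mathbf{M}$, I prune $\mathbf{M}$ via the mask $\mathbf{T}$ so that the hidden units in block $j$ only see coordinate $x_j$: i.e., in rows belonging to block $j$, keep only the entry in column $j$ and zero out the rest. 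This turns block $j$ into exactly the two-layer ReLU network $\mathbf{v}_{(j)}^T\sigma(\mathbf{u}_{(j)} x_j)$ studied in Lemma~\ref{lem:weight}, where $\mathbf{u}_{(j)}$ is column $j$ of $\mathbf{M}$ restricted to block $j$'s rows, and $\mathbf{v}_{(j)}$ is the corresponding sub-vector of $\mathbf{v}$. Note that after this pruning $\widehat g(\mathbf{x}) = \sum_{j=1}^d (\mathbf{s}\odot\mathbf{v})_{(j)}^T\sigma((\mathbf{T}\odot\mathbf{M})_{(j)}\mathbf{x})$ splits as a sum of the $d$ block outputs, each depending only on $x_j$.

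Next I would apply Lemma~\ref{lem:weight} to each block with error parameter $\epsilon/d$: since $n \geq C\log\frac{2d}{\epsilon} = C\log\frac{2}{\epsilon/d}$, the lemma guarantees that with probability at least $1-\epsilon/d$, simultaneously for all $w_j\in[-1,1]$ there is a mask $\mathbf{s}_{(j)}$ with $\sup_{|x_j|\leq 1}|w_j x_j - (\mathbf{v}\odot\mathbf{s})_{(j)}^T\sigma(\mathbf{u}_{(j)}x_j)| < \epsilon/d$. A union bound over the $d$ blocks gives that with probability at least $1-\epsilon$, all $d$ approximations hold simultaneously. On that event, given any target $\mathbf{w}$ with $\|\mathbf{w}\|_\infty\leq 1$, I choose in each block the mask guaranteed by the lemma for the scalar $w_j$; the resulting $\mathbf{s},\mathbf{T}$ satisfy, for all $\mathbf{x}$ with $\|\mathbf{x}\|_\infty\leq 1$,
\begin{align*}
\|f_{\mathbf{w}}(\mathbf{x}) - \widehat g(\mathbf{x})\| = \Big|\sum_{j=1}^d w_j x_j - \sum_{j=1}^d (\mathbf{v}\odot\mathbf{s})_{(j)}^T\sigma(\mathbf{u}_{(j)}x_j)\Big| \leq \sum_{j=1}^d \frac{\epsilon}{d} = \epsilon,
\end{align*}
where the inequality is the triangle inequality applied termwise with $|x_j|\leq \|\mathbf{x}\|_\infty \leq 1$. (Here the output is one-dimensional, so $\|\cdot\|$ is just absolute value; the strict inequality is preserved since each block is strictly below $\epsilon/d$.)

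The main subtlety — and the reason the proof is not entirely mechanical — is the bookkeeping around \emph{which} weights get pruned and the crucial observation (flagged as ``Step 0'' in Lemma~\ref{lem:weight}) that pruning $\mathbf{M}$ versus pruning $\mathbf{v}$ is interchangeable, $(\mathbf{v}\odot\mathbf{s})^T\sigma(\mathbf{M}\mathbf{x}) = \mathbf{v}^T\sigma((\mathbf{s}\mathbf{1}^T\odot\mathbf{M})\mathbf{x})$. I need both kinds of pruning here: pruning $\mathbf{M}$ by columns to isolate the single relevant input coordinate per block, and pruning $\mathbf{M}$ by rows (equivalently $\mathbf{v}$) to select the \textsc{SubsetSum} subset inside each block. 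I should also double-check that the distribution of the retained first-layer entries is still $U[-1,1]$ so Lemma~\ref{lem:weight} applies verbatim — it is, since pruning just drops entries and does not alter the law of the survivors, and the blocks are disjoint so independence across blocks is preserved, which is what the union bound needs. The only other care point is matching constants: choosing the per-block error $\epsilon/d$ costs an additive $\log d$ inside the logarithm, i.e.\ $n = \Theta(\log\frac{d}{\epsilon})$ per block and $d$ blocks give total width $\Theta(d\log\frac{d}{\epsilon})$, consistent with the statement.
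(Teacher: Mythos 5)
Your proposal is correct and follows essentially the same route as the paper: prune $\mathbf{M}$ into a block-diagonal structure so each block of $C\log\frac{d}{\epsilon}$ hidden units sees only one input coordinate, invoke Lemma~\ref{lem:weight} per block with error and failure probability $\epsilon/d$, union bound, and sum the per-coordinate errors via the triangle inequality. One trivial nit: the union bound does not require independence across blocks (it holds for arbitrary events), so that remark, while harmless, is not needed.
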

\begin{proof}

We will approximate $\mathbf{w}^T\mathbf{x}$ coordinate-wise. See Figure~\ref{fig:neuron} for illustration.
\begin{figure}[h]
  \centering
    \includegraphics[width=0.6\textwidth]{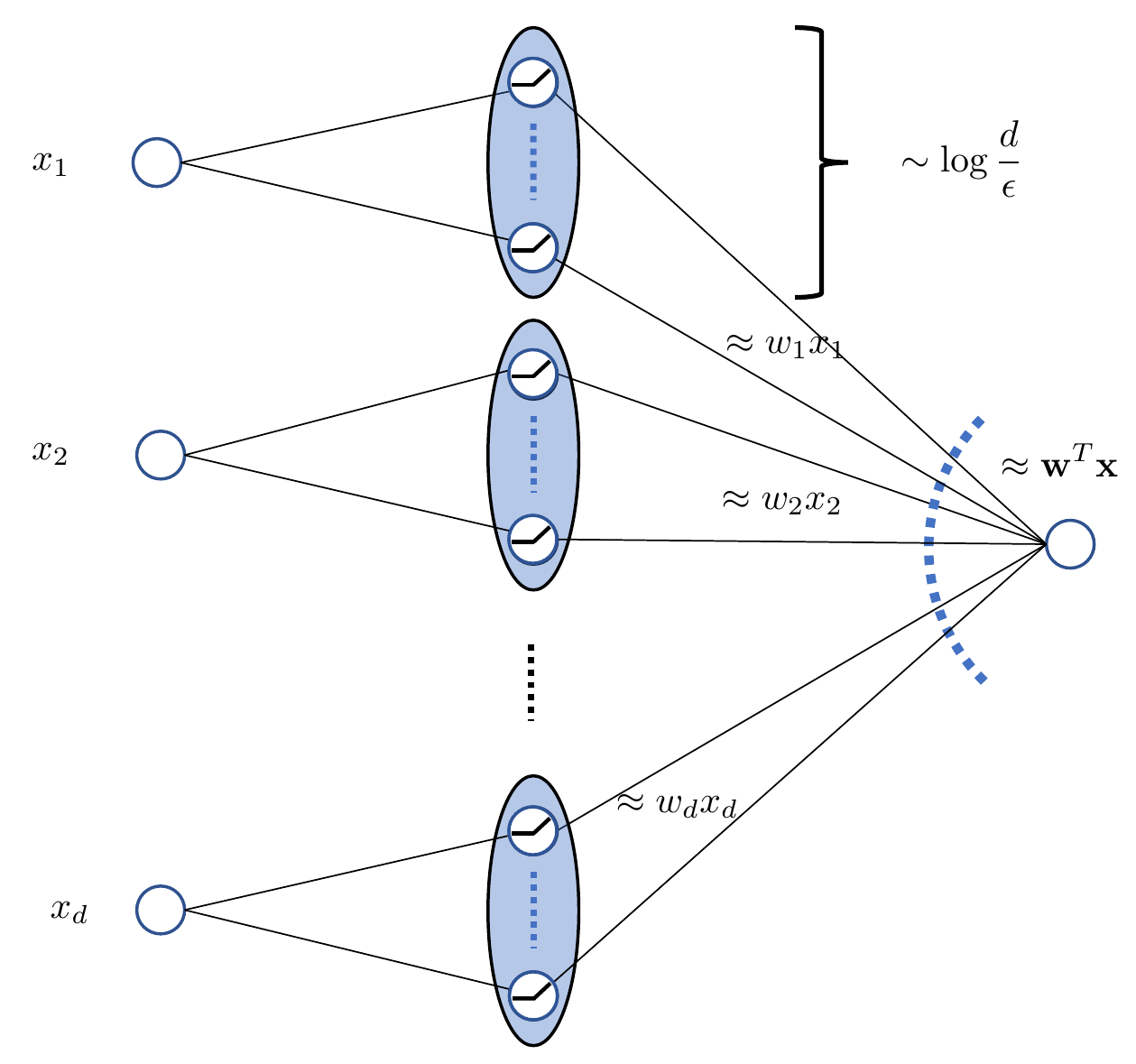}
  \caption{ Approximating a single neuron $\sigma(\mathbf{w}^Tx)$: A diagram showing our construction to approximate a single neuron $\sigma(\mathbf{w}^Tx).$
  We construct the first hidden layer with $d$ blocks (shown in blue), where each block contains $k = O\left(\log \frac{d}{\epsilon}\right)$ neurons.
  We first pre-process the weights by pruning the first layer so that it has a block structure as shown.
For ease of visualization, we only show two connections per block, i.e., each neuron in the $i^{\text{th}}$ block is connected to $x_i$ and (before pruning) the output neuron.
We then use Lemma~\ref{lem:weight} to show that second layer can be pruned so that $i^{\text{th}}$ block approximates $w_ix_i$.
    Overall, the construction approximates $\mathbf{w}^T\mathbf{x}$.
    Note that, after an initial pre-processing of the first layer, we only prune the second layer so that we can re-use the weights to approximate other neurons in a layer.
}
\label{fig:neuron}
\end{figure}

\paragraph{Step 1: Pre-processing $\mathbf{M}$}
We first begin by pruning $\mathbf{M}$ to create a block-diagonal matrix $\mathbf{M}'$. Specifically, we create $\mathbf{M}'$ by only keep the following non-zero entries:
\begin{align*}
    \mathbf{M}' =\begin{bmatrix}\mathbf{u}_1 & 0 & \dots & 0 \\
    0 & \mathbf{u}_2 & \dots & 0\\
    \vdots&\vdots&\dots & 0\\
    0 & 0 &\dots&\mathbf{u}_d
    \end{bmatrix}, \qquad \text { where } \mathbf{u}_i\in \mathbb R ^ { C  \log\left(\frac{d}{\epsilon}\right) }
    \end{align*}
We choose the binary matrix $\mathbf{T}$ to be such that $\mathbf{M}'=\mathbf{T}\odot \mathbf{M}$. 
We also decompose $\mathbf{v}$ and $\mathbf{s}$ as
\begin{align*}
\mathbf{s} =\begin{bmatrix}\mathbf{s}_{1} \\
    \mathbf{s}_{2} \\
    \vdots\\
    \mathbf{s}_{d} 
    \end{bmatrix}
,\qquad
    \mathbf{v} =\begin{bmatrix}\mathbf{v}_1 \\
    \mathbf{v}_2\\
    \vdots\\
    \mathbf{v}_d
    \end{bmatrix}, \text { where } \mathbf{s}_{i}, \mathbf{v}_i\in \mathbb R ^ { C  \log\left(\frac{d}{\epsilon}\right) }.
\end{align*}
Using this notation, we can express our network as the following:
\begin{equation}
(\mathbf{s}\odot \mathbf{v})^T \sigma (  \mathbf{M}' \mathbf{x} ) = \sum_{i=1}^d (\mathbf{s}_{i}\odot \mathbf{v}_i)^T \sigma ( \mathbf{u}_i x_i ). \label{eq:M'}
\end{equation}
\paragraph{Step 2: Pruning $u$}
Let $n = C\log(d/\epsilon)$ and define the event $E_{i,\epsilon}$ be the following event from the Lemma~\ref{lem:weight}:
\begin{align*}
    E_{i,\epsilon} := \left\{ \sup_{w \in [-1,1] }  \inf_{ \mathbf{s}_i \in \{0,1\}^n} \sup_{x: |x| \leq 1}|wx - (\mathbf{v}_i \odot \mathbf{s}_i)^T\sigma(\mathbf{u}_ix)  | \leq \epsilon \right\}
\end{align*}
Define the event $E_\epsilon := \bigcap_i E_{i,\epsilon}$, the intersection of all the events. 
We consider the event $E_{\frac{\epsilon}{d}}$, where the approximation parameter is $\frac{\epsilon}{d}$.
For each $i$, Lemma~\ref{lem:weight} shows that event $E_{i, \frac{\epsilon}{d}}$  holds with probability at least $1 -\frac{\epsilon}{d}$ because the dimension of $\mathbf{v}_i$ and $\mathbf{u}_i$ is at least $C\log(d/\epsilon)$. 
Taking a union bound we get that the event $E_{\frac{\epsilon}{d}}$ holds with probability at least $1 - \epsilon$.
On the event $E_{\frac{\epsilon}{d}}$, we obtain the following series of inequalities:

\begin{align*}
  \sup_{\|\mathbf{w}\|_\infty \leq 1}& \inf_{ \mathbf{s} , \mathbf{T}}\sup_{\|\mathbf{x}\|_\infty\leq 1} |\mathbf{w}^T \mathbf{x} - (\mathbf{s}_2\odot \mathbf{v})^T \sigma ( (\mathbf{S}_1 \odot \mathbf{M}) \mathbf{x} )| \\
  & \leq  \sup_{\|\mathbf{w}\|_\infty \leq 1} \inf_{ \mathbf{s} \in \{0,1\}^{dn}}\sup_{\|\mathbf{x}\|_\infty\leq 1} |\mathbf{w}^T \mathbf{x} - (\mathbf{s}_2\odot \mathbf{v})^T \sigma (  \mathbf{M}' \mathbf{x} )|  \tag*{(Pruning $\mathbf{M}$ according to Step 1 (Pre-processing  $\mathbf{M}$).)}\\
    & =  \sup_{\|\mathbf{w}\|_\infty \leq 1} \inf_{ \mathbf{s}_1,\dots,\mathbf{s}_d  \in \{0,1\}^{n} } \sup_{\|\mathbf{x}\|_\infty\leq 1} \left|\sum_{i=1}^d w_ix_i - \sum_{i=1}^d (\mathbf{s}_{i}\odot \mathbf{v}_i)^T \sigma ( \mathbf{u}_i x_i )\right|\tag*{(Using Eq. ~\eqref{eq:M'})}\\
    & \leq  \sup_{\|\mathbf{w}\|_\infty \leq 1} \inf_{ \mathbf{s}_1,\dots,\mathbf{s}_d  \in \{0,1\}^{n} } \sup_{\|\mathbf{x}\|_\infty\leq 1} \sum_{i=1}^d \left|w_ix_i -  (\mathbf{s}_{i}\odot \mathbf{v}_i)^T \sigma ( \mathbf{u}_i x_i )\right| \\
  & =  \sum_{i=1}^d \sup_{|w_i| \leq 1} \inf_{ \mathbf{s}_{i}\in \{0,1\}^{n}  }\sup_{|x_i|\leq 1}  \left|w_ix_i -  (\mathbf{s}_{i}\odot \mathbf{v}_i)^T \sigma ( \mathbf{u}_i x_i )\right| \\
                &\leq \sum_{i=1}^id  \frac{\epsilon}{d}  \tag*{(By definition of the event $E_{\frac{\epsilon}{d}}$)}\\
                &\leq  \epsilon.
\end{align*}

\end{proof}

\subsection{Approximating a single layer}\label{subsec:singleLayer}
In this subsection, we approximate a layer from the target network by pruning 2 layers of a randomly initialized network. The overview of the construction is given in Figure \ref{fig:layer}.

\begin{figure}[h]   
  \centering
    \includegraphics[width=0.6\textwidth]{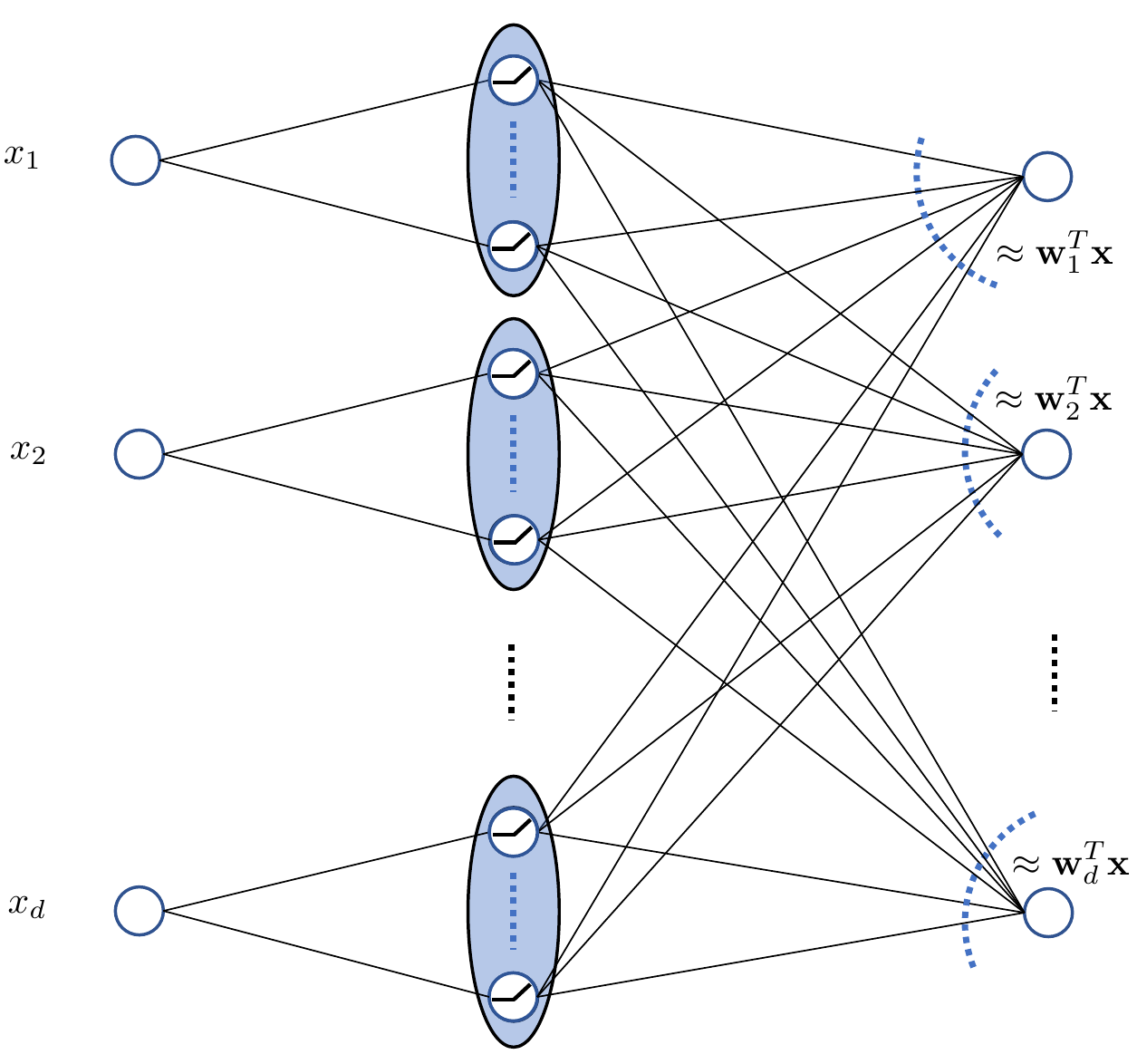}
  \caption{ Approximating a layer $\sigma(\mathbf{W}\mathbf{x})$: A diagram showing our construction to approximate a layer.  Let $\mathbf{w}_1,\mathbf{w}_2, \dots, \mathbf{w}_d$ be the $d$ rows of $\mathbf{W}$, i.e., the weights of $d$ neurons.
  Our construction has an additional hidden layer, which contains $d$ blocks (highlighted in blue), where each unit contains $k = O(\log(\frac{d}{\epsilon})$ neurons.
   We first pre-process the weights by pruning the first layer so that it has a block structure as shown.
For ease of visualization, we only show two connections per block, i.e., each neuron in the $i^{\text{th}}$ block is connected to $x_i$ and (before pruning) all the output neurons.
    }
     \label{fig:layer}
\end{figure}
\begin{lemma}\label{lem:layer3}(Approximating a layer)
Consider a randomly initialized two layer neural network $g(\mathbf{x})=\mathbf{N}\sigma(\mathbf{M}\mathbf{x})$ with $\mathbf{x}\in \mathbb{R}^{d_1}$ such that $\mathbf{N}$ has dimension $\left( d_2 \times C d_1\log \frac{d_1d_2}{\epsilon}\right)$ and $\mathbf{M}$ has dimension $\left(C d_1\log \frac{d_1d_2}{\epsilon}\times d_{1}\right)$, where each weight is initialized independently from the distribution $U[-1,1]$.

Let $\widehat{g}(x) =  (\mathbf{S}\odot \mathbf{N})^T \sigma ( (\mathbf{T} \odot \mathbf{ M}) \bf x ) $ be the pruned network for a choice of pruning matrices $\mathbf{S}$ and ${\bf T}$. 
If $f_\mathbf{W}(\mathbf{x})=\mathbf{W}\mathbf{x}$ is the linear (single layered) network, where $\bf W$ has dimensions $d_2 \times d_1$,
then with probability at least $1 -\epsilon$,
\begin{equation*}
    \sup_{\mathbf{W}: \|\mathbf{W}\|\leq 1,  \mathbf{W} \in \mathbb{R}^{d_2 \times d_1}}  \exists \mathbf{S}, \mathbf{T}: \sup_{\mathbf{x}:\|\mathbf{x}\|_\infty\leq 1} \|f_{\mathbf{W}}(\mathbf{x})- \widehat{g}(\mathbf{x})\| <\epsilon.
\end{equation*}
\end{lemma}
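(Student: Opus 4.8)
\textbf{Proof plan for Lemma~\ref{lem:layer3}.}
The plan is to reduce the multivariate case to $d_2$ parallel applications of Lemma~\ref{lem:approxNeuron}, reusing the \emph{same} random hidden layer $\mathbf{M}$ across all output neurons — this reuse is exactly what keeps the over-parameterization factor at $O(\log(d_1 d_2/\epsilon))$ rather than $O(d_2 \log(\cdot))$. First I would pre-process $\mathbf{M}$ by choosing $\mathbf{T}$ so that $\mathbf{M}' = \mathbf{T}\odot\mathbf{M}$ is block-diagonal with $d_1$ blocks $\mathbf{u}_1,\dots,\mathbf{u}_{d_1}$, each of height $C\log\frac{d_1 d_2}{\epsilon}$; this is the same construction as Step 1 of Lemma~\ref{lem:approxNeuron}, and it makes the hidden activation vector $\sigma(\mathbf{M}'\mathbf{x})$ split into $d_1$ sub-blocks, with the $i$-th block depending only on $x_i$. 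Then, writing $\mathbf{w}_1,\dots,\mathbf{w}_{d_2}$ for the rows of $\mathbf{W}$ and $\mathbf{N}_1,\dots,\mathbf{N}_{d_2}$ for the rows of $\mathbf{N}$, each output coordinate of $\widehat g$ is $(\mathbf{S}_j \odot \mathbf{N}_j)^T \sigma(\mathbf{M}'\mathbf{x})$, which — by the block structure — is precisely the single-neuron construction from Lemma~\ref{lem:approxNeuron} applied to approximate $\mathbf{w}_j^T\mathbf{x}$, using the pruning vector $\mathbf{S}_j$ on the shared blocks. So the $j$-th row of $\mathbf{N}$ plays the role of the vector ``$\mathbf{v}$'' in Lemma~\ref{lem:approxNeuron}, independently for each $j$.

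The key probabilistic step is the union bound over the $d_1 d_2$ weight-approximation events. For each pair $(i,j)$ — block $i$, output neuron $j$ — Lemma~\ref{lem:weight} (or equivalently the per-coordinate event inside Lemma~\ref{lem:approxNeuron}) gives that, with the hidden-block width at least $C\log\frac{2 d_1 d_2}{\eps'}$, the event that block $i$ of row $j$ can be pruned to approximate $w_{ji} x_i$ within $\epsilon'$ holds with probability at least $1 - \epsilon'$. Setting $\epsilon' = \epsilon^2/(d_1 d_2 \cdot \text{poly})$ — more precisely, choosing the per-coordinate accuracy to be $\epsilon/(\sqrt{d_2}\, d_1)$ and the per-event failure probability to be $\epsilon/(d_1 d_2)$ — a union bound over all $d_1 d_2$ pairs leaves total failure probability at most $\epsilon$, and this only costs an extra $O(\log(d_1 d_2/\epsilon))$ factor in the width, which is already budgeted. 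Crucially, the randomness in $\mathbf{M}$ is shared across the $d_2$ output neurons, but $\mathbf{N}$ has independent rows, so the events for different $j$ use fresh randomness in $\mathbf{N}_j$ while reusing $\mathbf{M}'$; one has to check that Lemma~\ref{lem:weight}'s guarantee is stated for fixed $\mathbf{u}$ and random $\mathbf{v}$ (it is — the ``for all $w$'' is inside), so conditioning on $\mathbf{M}$ and taking the union bound over $(i,j)$ is legitimate.

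Finally, on the good event I would convert the per-coordinate $\ell_\infty$-style bounds into the $\ell_2$ output bound claimed in the lemma. For each output neuron $j$:
\begin{align*}
\bigl| (\mathbf{W}\mathbf{x})_j - \widehat g(\mathbf{x})_j \bigr|
= \Bigl| \sum_{i=1}^{d_1} \bigl( w_{ji} x_i - (\mathbf{S}_{j,i}\odot \mathbf{N}_{j,i})^T \sigma(\mathbf{u}_i x_i) \bigr) \Bigr|
\leq \sum_{i=1}^{d_1} \frac{\epsilon}{\sqrt{d_2}\, d_1} = \frac{\epsilon}{\sqrt{d_2}},
\end{align*}
uniformly over $\|\mathbf{x}\|_\infty \le 1$, hence $\|f_\mathbf{W}(\mathbf{x}) - \widehat g(\mathbf{x})\| \le \sqrt{\sum_{j=1}^{d_2} \epsilon^2/d_2} = \epsilon$. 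The only subtlety worth flagging is where the hypothesis $\|\mathbf{W}\| \le 1$ (spectral norm) enters: it is used to guarantee $\|\mathbf{w}_j\|_\infty \le \|\mathbf{w}_j\| \le \|\mathbf{W}\| \le 1$, so that each row is a legal input to Lemma~\ref{lem:approxNeuron}; it also controls $\|\mathbf{W}\mathbf{x}\|$ on the unit ball but that is not needed for this lemma. I expect the main obstacle to be purely bookkeeping — tracking the exact per-coordinate accuracy and the per-event failure probability so that (a) the union bound over $d_1 d_2$ events closes at total probability $1-\epsilon$, and (b) the resulting logarithmic factors match the stated width $C d_1 \log\frac{d_1 d_2}{\epsilon}$ — rather than any genuinely new idea beyond the weight-reuse trick already introduced in Lemma~\ref{lem:approxNeuron}.
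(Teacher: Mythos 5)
Your proposal is correct and follows essentially the same route as the paper: block-diagonal pre-processing of $\mathbf{M}$, reuse of the shared blocks $\mathbf{u}_j$ across the $d_2$ output rows of $\mathbf{N}$, and a union bound over the $d_1 d_2$ per-weight events supplied by Lemma~\ref{lem:weight}. The only immaterial difference is the final bookkeeping: the paper takes per-weight accuracy $\epsilon/(d_1 d_2)$ and bounds the output $\ell_2$ norm by the $\ell_1$ norm, whereas you take accuracy $\epsilon/(\sqrt{d_2}\,d_1)$ and sum squares; both close at the same $O(\log(d_1 d_2/\epsilon))$ width.
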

\begin{proof}

Our proof strategy is similar to the proof in Lemma~\ref{lem:approxNeuron}. 
\paragraph{Step 1: Pre-processing $\mathbf{M}$}
Similar to Lemma~\ref{lem:approxNeuron}, we begin by pruning $\mathbf{M}$ to get a block diagonal matrix $\mathbf{M}'$.
\begin{align*}
    \mathbf{M}' =\begin{bmatrix}\mathbf{u}_1 & 0 & \dots & 0 \\
    0 & \mathbf{u}_2 & \dots & 0\\
    \vdots&\vdots&\dots & 0\\
    0 & 0 &\dots&\mathbf{u}_{d_1}
    \end{bmatrix}, \qquad \text { where } \mathbf{u}_i \in \mathbb R ^ { C \log\left(\frac{d_1d_2}{\epsilon}\right) }
    \end{align*}
    Thus, $\mathbf{T}$ is such that $\mathbf{M}'=\mathbf{T}\odot\mathbf{M}$. We also decompose ${\bf N}$ and ${ \bf S}$ as following
    \begin{align*}
                \mathbf{S} =\begin{bmatrix}\mathbf{s}_{1,1}^T & \dots & \mathbf{s}_{1,d_1}^T \\
    \mathbf{s}_{2,1}^T & \dots & \mathbf{s}_{2,d_1}^T\\
    \vdots & \dots &  \vdots\\
    \mathbf{s}_{d_2,1}^T & \dots & \mathbf{s}_{d_2,d_1}^T
    \end{bmatrix}, \qquad 
            \mathbf{N} =\begin{bmatrix}\mathbf{v}_{1,1}^T & \dots & \mathbf{v}_{1,d_1}^T \\
    \mathbf{v}_{2,1}^T & \dots & \mathbf{v}_{2,d_1}^T\\
    \vdots & \dots &  \vdots\\
    \mathbf{v}_{d_2,1}^T & \dots & \mathbf{v}_{d_2,d_1}^T
    \end{bmatrix}, \qquad 
\qquad \text { where } \mathbf{v}_{i,j}, \mathbf{u}_i \in \mathbb R ^ { C \log\left(\frac{d_1d_2}{\epsilon}\right) }
    \end{align*}
Using this notation, we  get the following relation:
\begin{align}
(\mathbf{S} \odot \mathbf{N})\sigma(\mathbf{M}'\mathbf{x}) = \begin{bmatrix}
\sum_{j=1}^{d_1} ( \mathbf{s}_{1,j} \odot \mathbf{v}_{1,j})^T \sigma(\mathbf{u}_jx_j)\\ 
\vdots\\
\sum_{j=1}^{d_1} ( \mathbf{s}_{d_2,j} \odot \mathbf{v}_{d_2,j})^T \sigma(\mathbf{u}_jx_j) 
\end{bmatrix}
\label{EqnUppBdLayerRep}
\end{align}
\paragraph{Step 2: Pruning $\mathbf{N}$}
Note that $\mathbf{v}_{i,j}$ and $\mathbf{u}_{i}$ contain i.i.d. random variables from Uniform distribution.
Let $n = C\log(d_1d_2/\epsilon)$ and define $E_{i,j,\epsilon}$ be the following event from the Lemma~\ref{lem:weight}:
\begin{align*}
    E_{i,j,\epsilon} := \left\{ \sup_{w \in [-1,1]} \inf_{ \mathbf{s}_{i,j} \in \{0,1\}^n} \sup_{x: |x| \leq 1}| wx - (\mathbf{v}_{i,j} \odot \mathbf{s}_{i,j})^T\sigma(\mathbf{u}_ix)  | \leq \epsilon \right\}
\end{align*}
Define  $E_{\epsilon} := \bigcap_{1 \leq i\leq d_2}\bigcap_{1 \leq j\leq d_1 }E_{i,j,\epsilon} $ to be the intersection of all individual events.
Lemma~\ref{lem:weight} states that each event $E_{i,j, \frac{ \epsilon}{d_1 d_2}}$ holds with probability $1 -\frac{ \epsilon}{d_1 d_2}$ because $\mathbf{u}_i$ and $\mathbf{v}_{i,j}$ have dimensions at least $C \log(\frac{d_1d_2}{\epsilon})$.
By a union bound, the event $E_{\frac{\epsilon}{d_1d_2}}$ holds with probability $1 - \epsilon$.  
On the event $E_{\frac{\epsilon}{d_1d_2}}$,  we get the following inequalities:
\begin{align*}
\sup_{\mathbf{W}: \|\mathbf{W}\|\leq 1} &\inf_{\mathbf{S}, \mathbf{T}} \sup_{\|\mathbf{x}\|_\infty \leq 1} \| \mathbf{W}\mathbf{x} -  (\mathbf{S}\odot \mathbf{N})^T \sigma ( (\mathbf{T} \odot \mathbf{ M}) \bf x )   \| \\
&\leq \sup_{\mathbf{W}: \|\mathbf{W}\|\leq 1} \inf_{\mathbf{S}} \sup_{\|\mathbf{x}\|_\infty \leq 1} \| \mathbf{W}\mathbf{x} -  (\mathbf{S}_2\odot \mathbf{N})^T \sigma (  \mathbf{ M}' \bf x )   \| \tag*{(Pruning $\mathbf{M}$ according to Step 1 (Pre-processing $\mathbf{M}$))}\\
&\leq \sup_{\mathbf{W}: \|\mathbf{W}\|\leq 1} \inf_{\mathbf{s}_{i,j}  \in \{0,1\}^n } \sup_{\|\mathbf{x}\|_\infty \leq 1}  \sum_{i=1}^{d_2}\left| \sum_{j=1}^{d_1} w_{i,j} x_j -  \sum_{j=1 }^{d_1} (\mathbf{s}_{i,j}\odot  \mathbf{v_{i,j}})^T \sigma ( \mathbf{u_j} x_j )   \right|\tag*{(Using Eq.~\eqref{EqnUppBdLayerRep}) }\\
&\leq \sup_{w_{i,j}: |w_{i,j}|\leq 1} \inf_{\mathbf{s}_{i,j}  \in \{0,1\}^n } \sup_{x_j : |x_j| \leq 1}\sum_{i=1 }^{d_2}\sum_{j=1 }^{d_1}   \left|  w_{i,j} x_j -   (\mathbf{s}_{i,j}\odot  \mathbf{v_{i,j}})^T \sigma ( \mathbf{u_j} x_j )   \right|\\
&\leq \sup_{w_{i,j}: |w_{i,j}|\leq 1} \inf_{\mathbf{s}_{i,j}  \in \{0,1\}^n } \sum_{i=1 }^{d_2}\sum_{j=1 }^{d_1} \sup_{x_j : |x_j| \leq 1}  \left|  w_{i,j} x_j -   (\mathbf{s}_{i,j}\odot  \mathbf{v_{i,j}})^T \sigma ( \mathbf{u_j} x_j )   \right|\\
&= \sum_{i=1 }^{d_2}\sum_{j=1 }^{d_1}\sup_{w_{i,j}: |w_{i,j}|\leq 1} \inf_{\mathbf{s}_{i,j}  \in \{0,1\}^n }  \sup_{x_j : |x_j| \leq 1}  \left|  w_{i,j} x_j -   (\mathbf{s}_{i,j}\odot  \mathbf{v_{i,j}})^T \sigma ( \mathbf{u_j} x_j )   \right|\\
&\leq d_1d_2 \frac{\epsilon}{d_1 d_2} \leq \epsilon. \tag*{(By definition of the event $E_{\frac{\epsilon}{d_1d_2}}$)}
\end{align*}
\end{proof}

\subsection{Proof of Theorem \ref{thm:upperBound}} \label{app:UpperBd}
We now state the proof of Theorem~\ref{thm:upperBound} with the help of the lemmas in the previous subsection.
\begin{proof}(Proof of Theorem~\ref{thm:upperBound})
Let $\mathbf{x}_i$ be the input to the $i$-th layer of $f_{(\mathbf{W}_l,\dots,\mathbf{W}_1)}(\mathbf{x})$. Thus, 
\begin{enumerate}
\item $\mathbf{x}_1=\mathbf{x}$,
    \item for $1\leq i\leq l-1$, $\mathbf{x}_{i+1}=\sigma(\mathbf{W}_i \mathbf{x}_i)$.
\end{enumerate}
 Thus $f_{(\mathbf{W}_l,\dots,\mathbf{W}_1)}(\mathbf{x})=\mathbf{W}_l\mathbf{x}_l$.
 
For $i^{th}$ layer weights $\mathbf{W}_i$, let $\mathbf{S}_{2i}$ and $\mathbf{S}_{2i-1}$ be the binary matrices that achieve the guarantee in Lemma~$\ref{lem:layer3}$. 
Lemma~\ref{lem:layer3} states that with probability $1- \frac{\epsilon}{2l}$ the following event holds: 
\begin{align}
    \sup_{\mathbf{W}_i \in \mathbb R^{d_{i+1} \times d_i}: \|\mathbf{W}_i\|\leq 1} \exists \mathbf{S}_{2i} , \mathbf{S}_{2i-1} : \sup_{\mathbf{x}:\|\mathbf{x}\|\leq 1} \|\mathbf{W}_i\mathbf{x}- (\mathbf{M}_{2i} \odot \mathbf{S}_{2i})\sigma( (\mathbf{S}_{2i}\odot\mathbf{M}_{2i-1})\mathbf{x})\| <\epsilon/2l. \label{ineq:lastLayer}
    \end{align}
As ReLU is $1$-Lipschitz, the above event implies the following:
\begin{align}     
    \sup_{\mathbf{W}_i \in \mathbb R^{d_{i+1} \times d_i}: \|\mathbf{W}_i\|\leq 1} \exists \mathbf{S}_{2i} , \mathbf{S}_{2i-1} : \sup_{\mathbf{x}:\|\mathbf{x}\|\leq 1} \| \sigma(\mathbf{W}_i\mathbf{x})- \sigma((\mathbf{M}_{2i} \odot \mathbf{S}_{2i})\sigma( (\mathbf{S}_{2i}\odot\mathbf{M}_{2i-1})\mathbf{x}))\| <\epsilon/2l.\label{ineq:layer}
\end{align}
Taking a union bound, we get that with probability $1-\epsilon$, the above inequalities \eqref{ineq:lastLayer} and \eqref{ineq:layer} hold for every layer simultaneously.
For the remainder of the proof, we will assume that this event holds.
For the any fixed function $f$, let $g_f = g_{(\mathbf{W}_l,\dots,\mathbf{W}_1)}$ be the pruned network constructed layer-wise, by pruning with binary matrices satisfying Eq.~\eqref{ineq:lastLayer} and Eq.~ \eqref{ineq:layer}, and  
let these pruned matrices be $\mathbf{M}_i'$.
Let $\mathbf{x}_i'$ be the input to the $2i-1$-th layer of $g_f$.
We note that $\mathbf{x}_i'$ satisfies the following recurrent relations:
\begin{enumerate}
    \item $\mathbf{x}_1'=\mathbf{x}$, 
    \item for $1\leq i\leq l-1$, $\mathbf{x}'_{i+1}=\sigma(\mathbf{M}_{2i}' \sigma( \mathbf{M}_{2i-1}' \mathbf{x}'_i))$.
\end{enumerate}
Because the input $\mathbf{x}$ has $\|\mathbf{x}\|\leq1$, Equation \eqref{ineq:layer} also states that $\|\mathbf{x}_{i}'\|\leq \left(1+\frac{\epsilon}{2l}\right)^{i-1}$. To see this, note that
we use Equation \eqref{ineq:layer} to get for $1 \leq i \leq l-1$ as
\begin{align*}
\| \sigma( \mathbf{W}_i \mathbf{x}_{i}') - \mathbf{x}_{i+1}' \| &\leq \|\mathbf{x}_{i}'\| (\epsilon/ 2l) \\
\implies \|\mathbf{x}_{i+1}'\| &\leq \|\mathbf{x}_{i}'\|(\epsilon/2l) + \| \sigma( \mathbf{W}_i \mathbf{x}_{i}')\| \leq \|\mathbf{x}_{i}'\|(\epsilon/2l) + \|  \mathbf{W}_i \mathbf{x}_{i}'\| \leq \|\mathbf{x}_{i}'\|(\epsilon/2l) + \| \mathbf{x}_{i}'\|.   \end{align*}
Applying this inequality recursively, we get the claim that for $1 \leq i \leq l-1$, $\|\mathbf{x}_{i}'\|\leq \left(1+\frac{\epsilon}{2l}\right)^{i-1}$. 
Using this, we can bound the error between $\mathbf{x}_i$ and $\mathbf{x}'_{i}$. For $1\leq i\leq l-1$,
\begin{align*}
    \|\mathbf{x}_{i+1}-\mathbf{x}_{i+1}'\|
    =&\|\sigma(\mathbf{W}_i\mathbf{x}_i)-\sigma(\mathbf{M}'_{2i}\sigma(\mathbf{M}'_{2i-1}\mathbf{x}_i'))\|&\\
    \leq&\|\sigma(\mathbf{W}_i\mathbf{x}_i)-\sigma(\mathbf{W}_i\mathbf{x}_i')\|+\|\sigma(\mathbf{W}_i\mathbf{x}_i')-\sigma(\mathbf{M}_{2i}'\sigma(\mathbf{M}'_{2i-1}\mathbf{x}_i'))\|&\\
    \leq&\|\mathbf{x}_i-\mathbf{x}_i'\|+\|\mathbf{W}_i\mathbf{x}_i'-\mathbf{M}_{2i}'\sigma(\mathbf{M}'_{2i-1}\mathbf{x}'_i)\|&\\ 
    <&\|\mathbf{x}_i-\mathbf{x}'_i\|+ \left(1+\frac{\epsilon}{2l}\right)^{i-1}\frac{\epsilon}{2l},&
\end{align*}
where we use Equation~\eqref{ineq:lastLayer}.
Unrolling this we get
\begin{align*}
    \|\mathbf{x}_{l}-\mathbf{x}_{l}'\|
    \leq \sum_{i=1}^{l-1}\left(1 + \frac{\epsilon}{2l}\right)^{i-1} \frac{\epsilon}{2l}. %
\end{align*}
Finally using the inequality above, we get that with probability at least $1-\epsilon$,
\begin{align*}
    \|f_{(\mathbf{W}_l,\dots,\mathbf{W}_1)}(\mathbf{x})-g_{(\mathbf{W}_l,\dots,\mathbf{W}_1)}(\mathbf{x})\|&=\|\mathbf{W}_l\mathbf{x}_l-\mathbf{M}'_{2l}\sigma(\mathbf{M}_{2l-1}'\mathbf{x}'_l)\|&\\
    &\leq\|\mathbf{W}_l\mathbf{x}_l-\mathbf{W}_l\mathbf{x}'_l\|+\|\mathbf{W}_l\mathbf{x}'_l-\mathbf{M}'_{2l}\sigma(\mathbf{M}'_{2l-1}\mathbf{x}'_l)\|&\\
     &\leq\|\mathbf{x}_l-\mathbf{x}'_l\|+\|\mathbf{W}_l\mathbf{x}_l'-\mathbf{M}_{2l}'\sigma(\mathbf{M}'_{2l-1}\mathbf{x}_l')\|&\\ 
    &<\|\mathbf{x}_l-\mathbf{x}'_l\|+ \left(1+\frac{\epsilon}{2l}\right)^{l-1}\frac{\epsilon}{2l}&\\
    &\leq \left(\sum_{i=1}^{l-1}\left(1 + \frac{\epsilon}{2l}\right)^{i-1} \frac{\epsilon}{2l}\right) + \left(1+\frac{\epsilon}{2l}\right)^{l-1}\frac{\epsilon}{2l}&\\
    &\leq \sum_{i=1}^{l}\left(1 + \frac{\epsilon}{2l}\right)^{i-1} \frac{\epsilon}{2l} &\\
    &=  \left(1+\frac{\epsilon}{2l}\right)^l -1&\\
    &< e^{\epsilon/2} -1 &\\
    &<\epsilon.&\tag*{(Since $\epsilon <1$.)}
\end{align*}
Replacing $\epsilon$ in this proof with $\min \{\epsilon,\delta\}$ gives us the statement of the theorem.
\end{proof}
\section{Proof of Lower Bound}
\label{app:lowerBoundProof}

\begin{proof}(Proof xof Theorem~\ref{thm:lowerBoundStrong})
Firstly, note that $h_\mathbf{W}(\mathbf{x})=\mathbf{W}\mathbf{x}$. 
Another fact we use in this proof is that matrices $\bf W$ of dimension $d\times d$ can be considered as points in the space $\mathbb{R}^{d\times d}\equiv \mathbb{R}^{d^2}$.
The metric that we would be using on this space would be the operator norm of matrices $\|\cdot\|$.
Note that $\mathcal{G}$ is a random set of functions, but we abuse the notation by using $|\mathcal G|$ denote the \textit{maximum} number of sub-networks that can be formed, starting from any initialization with the given architecture.
 \item
\paragraph{Step 1: Packing argument.}

Consider the normed space of $d \times d$ matrices,  $\mathcal{W}=\{\mathbf{W} \in \mathbb R^{d \times d}:\|\mathbf{W}\|\leq 1\}$, with the operator norm $\|\cdot\|$. 
Let $\mathcal{P}$ be a $2\epsilon$-separated set of $(\mathcal{W}, \|\cdot\|)$, i.e.
$\mathcal{P} \subset \mathcal{W}$ and $\|\mathbf{M} - \mathbf{M}'\| > 2\epsilon$ for all distinct $ \mathbf{M} , \mathbf{M}' \in \mathcal{P}$. 

Note that any function $g'$ can only approximate at most one member of $\mathcal{P}$. To see this, let us assume on the contrary that a $g'$ can approximate two distinct members $\mathbf{W}_1$ and $\mathbf{W}_2$ of $\mathcal{P}$. Then a triangle inequality states that
\begin{equation*}
\|\mathbf{W}_1 - \mathbf{W}_2\|=  \sup_{\mathbf{x}:\|\mathbf{x}\|\leq 1} \|\mathbf{W}_1 \mathbf{x} -\mathbf{W}_2\mathbf{x}\| \leq   \sup_{\mathbf{x}:\|\mathbf{x}\|\leq 1} \|g'(\mathbf{x})-\mathbf{W}_1\mathbf{x}\| +     \sup_{\mathbf{x}:\|\mathbf{x}\|\leq 1} \|g'(\mathbf{x})-\mathbf{W}_2\mathbf{x}\|  \leq 2\epsilon,
\end{equation*}
which is a contradiction to the definition of a $2 \epsilon$-separated set. Hence, $g'$ can approximate at most only one member of $\mathcal{P}$.

\paragraph{Step 2: Relation between $|\mathcal{G}|$ and $|\mathcal{P}|$.}
The goal of this step is to show that, under the theorem assumptions, $|\mathcal{P}|<2|\mathcal{G}|$.
If $|\mathcal{P}|>2|\mathcal{G}|$, then we show that one of the matrices in $\mathcal{P}$ is the difficult matrix $W$ that we're looking for. 

Let us assume that $|\mathcal{P}|>2|\mathcal{G}|$. Recall that the previous step states that, for any realization of $g$, the corresponding $ \mathcal{G}$ can only approximate at most $|\mathcal{G}|$ matrices in $\mathcal{P}$. 
Therefore, for a fixed realization of $\mathcal{G}$, we get that 
\begin{align*}
\frac{\sum_{\mathbf{W} \in \mathcal{P}} \mathbb I \left( \exists g' \in \mathcal G : \sup_{\mathbf{x}:\|\mathbf{x}\|\leq 1} \|g'(\mathbf{x})-\mathbf{W}\mathbf{x}\|  \leq \epsilon \right) }{|\mathcal P|} &\leq  \frac{|\mathcal{G}|}{|\mathcal{P}|} < \frac{1}{2}.
\end{align*}
Taking the expectation over the distribution of $g$, we get that
\begin{align*}
\frac{\sum_{\mathbf{W} \in \mathcal{P}} \mathbb{P} \left( \exists g' \in \mathcal G : \sup_{\mathbf{x}:\|\mathbf{x}\|\leq 1} \|g'(\mathbf{x})-W\mathbf{x}\|  \leq \epsilon \right) }{|\mathcal{P}|} &< \frac{1}{2}.
\end{align*}
As the minimum is less than the average, there exists a $\mathbf{W} \in \mathcal P$ such that $\mathbb P \left( \exists g' \in \mathcal G : \sup_{\mathbf{x}:\|\mathbf{x}\|\leq 1} \|g'(\mathbf{x})-W\mathbf{x}\|  \leq \epsilon \right) < \frac{1}{2} $, which is a contradiction to Eq.~\eqref{eq:lowerBdApprox}.
Therefore, $2|\mathcal{G}| >  |\mathcal P| $.

\paragraph{Step 3: Lower bound on $|\mathcal{P}|$. }

We will now choose $\mathcal{P}$ with the maximum cardinality of all $2 \epsilon$-separated sets, i.e., that achieves the packing number. As packing number is lower bounded by the covering number, we will try to find a lower bound on the size of an $2 \epsilon$-net of $\mathcal{W}$ \cite[Lemma 4.2.8]{Vershynin18}.
 Now, any $2\epsilon$-cover has has to have at least $\frac{\text{Vol}(\{\mathbf{W}:\|\mathbf{W}\|\leq 1\})}{\text{Vol}(\{\mathbf{W}:\|\mathbf{W}\|\leq 2\epsilon\})}$ elements, where the volume is the Lebesgue measure in $\mathbb{R}^{d\times d}=\mathbb{R}^{d^2}$.
 We also have that $\text{Vol}(\{\mathbf{W}:\|\mathbf{W}\|\leq c\}>0$  because $\{\mathbf{W}:\|\mathbf{W}\|\leq c\}$ contains $\{\mathbf{W}:\|\mathbf{W}\|_{\text{Frobenius}}\leq c \}$.
 Thus, we get that $\frac{\text{Vol}(\{\mathbf{W}:\|\mathbf{W}\|\leq 1\})}{\text{Vol}(\{\mathbf{W}:\|\mathbf{W}\|\leq 2\epsilon\})}= (2\epsilon)^{-d^2}$.
  Putting everything together, we get that
\begin{align*}
     2|\mathcal{G}| >  |\mathcal{P}| >  |\mathcal{N} (\scriptW, \|\cdot\|, 2 \epsilon)| \geq  \left( \frac{1}{2 \epsilon}\right)^{-d^2}.
\end{align*}
\paragraph*{Case $l = 2$} 
Let the dimension of $\mathbf{M}_2$ be $d\times s$ and the dimension of $\mathbf{M}_1$ be $s\times d$. We need a lower bound on $s$.
Now, the number of matrices that can be created by pruning $\mathbf{M}_2$ are $2^{sd}$ and similarly the number of matrices that can be created by pruning $\mathbf{M}_1$ are $2^{sd}$. Thus, the total number of ReLUs that can be formed by pruning $\mathbf{M}_2$ and $\mathbf{M}_1$ is at most $2^{2sd}$. Thus, $|\mathcal{G}|\leq 2^{2sd}$. 
Therefore, we get that
\begin{align*}
2^{2sd+1} > \left( \frac{1}{2 \epsilon}\right) ^{-d^2}.
\end{align*}
 This shows that $s=\Omega\left(d \log \left( \frac{1}{2 \epsilon} \right)\right)$ is needed to approximate every function in $\mathcal{F}$ by pruning $g$ with probability 1/2.

\paragraph{Case $l > 2$} Let the total number of parameters be $m$. Therefore, we get that $|\mathcal{G} |\leq 2^{m}$. 
Following the same arguments as before, we get that $m =\Omega\left(d^2 \log \left( \frac{1}{2 \epsilon} \right)\right)$.
\end{proof}

\section{Subset sum results}\label{app:subsetcor99432374}

\subsection{Product of uniform distributions contains a uniform distribution}

\begin{lemma} \label{lem:prodUnif}
Let $X\sim U[0,1]$ (or $X\sim U[-1,0]$) and $Y\in U[-1,1]$ be independent random variables. Then the PDF of the random variable $XY$ is 
\begin{equation*}
    f_{XY}(z) = \begin{cases}
    \frac{1}{2}\log \frac{1}{|z|}& |z|\leq 1\\
    0 & \text{ otherwise}
    \end{cases}
\end{equation*}
\end{lemma}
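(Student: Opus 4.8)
The plan is to compute the density of the product $Z = XY$ directly via the CDF, handling the case $X \sim U[0,1]$ (the case $X \sim U[-1,0]$ follows by the symmetry $X \mapsto -X$, which leaves $|Z|$ unchanged and flips the sign of $Z$, hence gives the same density since the claimed density depends only on $|z|$). First I would note that since $X \in [0,1]$ and $Y \in [-1,1]$ we have $|Z| \leq 1$, so the density vanishes for $|z| > 1$. Also $Z$ is symmetric about $0$ (because $Y$ is symmetric and independent of $X$), so it suffices to compute the density for $z \in (0,1)$ and then symmetrize.

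For $z \in (0,1)$, I would condition on $X = x$. Given $X = x \in (0,1]$, the variable $Z = xY$ is uniform on $[-x, x]$ with density $\frac{1}{2x}\mathbf{1}_{|z|\le x}$. Integrating against the density of $X$ (which is $1$ on $[0,1]$), the density of $Z$ at a point $z > 0$ is
\begin{equation*}
f_Z(z) = \int_0^1 \frac{1}{2x}\,\mathbf{1}_{\{x \ge z\}}\,dx = \int_z^1 \frac{1}{2x}\,dx = \frac{1}{2}\log\frac{1}{z}.
\end{equation*}
By the symmetry of $Z$, for general $z$ with $0 < |z| \leq 1$ this gives $f_Z(z) = \frac{1}{2}\log\frac{1}{|z|}$, which is exactly the claimed formula. (One should also check the integrand is genuinely a conditional density: for fixed $x$, $xY \sim U[-x,x]$ is immediate from $Y \sim U[-1,1]$ and the scaling property of the uniform distribution.)

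The only mild subtlety — and the closest thing to an obstacle — is justifying the interchange of integration / the conditioning step rigorously (e.g. via Fubini, or by differentiating the CDF $F_Z(z) = \mathbb{P}(XY \le z)$ under the integral sign), and making sure the endpoint behavior is fine: the integrable singularity of $\log\frac{1}{|z|}$ at $z = 0$ is harmless, and one can sanity-check normalization via $\int_{-1}^{1}\frac{1}{2}\log\frac{1}{|z|}\,dz = \int_0^1 \log\frac{1}{z}\,dz = 1$. I would present the CDF-differentiation route as the cleanest: write $F_Z(z) = \int_0^1 \mathbb{P}(xY \le z)\,dx$, evaluate $\mathbb{P}(xY \le z)$ as a piecewise-linear function of $z$ for each $x$, and differentiate in $z$ to recover $f_Z$. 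Everything else is routine.
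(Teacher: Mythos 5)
Your proof is correct and follows essentially the same elementary route as the paper: a direct computation of the law of the product $XY$, with the only cosmetic difference that you condition on $X$ and read the density off as a mixture of uniforms $\int_{|z|}^1 \tfrac{1}{2x}\,dx$, whereas the paper conditions on $Y$, computes the CDF explicitly, and differentiates. Both computations give $\tfrac{1}{2}\log\tfrac{1}{|z|}$ after the same symmetry observations, so no further comparison is needed.
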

\begin{proof}
It is easy to see why $f_{XY}(z)=0$ for $z>1$. We prove for $X\sim U[0,1]$. The proof for $X\sim U[-1,0]$ is similar.

Let us first try to find the CDF of $XY$. 

Let $0\leq z\leq 1$ be a real number. Note that $XY\leq 1$. Now, if $XY\leq z$, and if $Y\geq z$, then $X\leq z/Y$. However, if $Y<z$, then $X$ can be anything in its support $[0,1]$. Thus,
\begin{align*}
    F_{XY}(z) 
    &=\mathbb{P}(XY\leq z)&\\
    &=\int_{0}^z \frac{1}{2} \int_{0}^1 1 \text{d}x \text{d}y + \int_{z}^1 \frac{1}{2} \int_{0}^{z/y} 1 \text{d}x \text{d}y&\\
     &=\frac{z}{2} + \frac{1}{2}\int_{z}^1 \frac{z}{y} \text{d}y&\\
     &=\frac{z}{2} - \frac{z \log z}{2}.&
\end{align*}
Differentiating this, the pdf for $0\leq z\leq 1$ is
\begin{equation*}
    f_{XY}(z)=\frac{1}{2}\log \frac{1}{z}.
\end{equation*}

Now, because $XY$ is symmetric around $0$, we get that for $|z|\leq 1$
\begin{equation*}
    f_{XY}(z)=\frac{1}{2}\log \frac{1}{|z|}.
\end{equation*}

\end{proof}

\begin{corollary} \label{cor:prodUnif2}
Let $X\sim U[0,1]$ (or $X\sim U[-1,0]$) and $Y\in U[-1,1]$ be independent random variables.
Let $P$ be the distribution of $XY$. 
Let $\delta_0$ be the Dirac-delta function.
Define a distribution $D = \frac{1}{2}\delta_{0} + \frac{1}{2}P$. 

Then, there exists a distribution $Q$ such that
\begin{align*}
    P = \left( \frac{1}{2} \log 2\right) U\left[-\frac{1}{2},\frac{1}{2}\right]+\left( 1- \frac{1}{2} \log 2\right) Q
\end{align*}
\end{corollary}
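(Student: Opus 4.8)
The plan is to exhibit $Q$ explicitly through its density and then verify the three things that need verifying: that the proposed density is nonnegative, that it has the right total mass, and that the mixture identity holds. First I would invoke Lemma~\ref{lem:prodUnif}, which gives the density of $P$ (the law of $XY$) as $f_{XY}(z)=\frac12\log\frac{1}{|z|}$ for $|z|\le 1$ and $0$ otherwise. Writing $u(z)=\mathbf{1}_{[-1/2,1/2]}(z)$ for the density of $U[-\frac12,\frac12]$ — note the interval has length one, so this is correctly normalized — I would define the candidate remainder density $r(z):=f_{XY}(z)-\tfrac12(\log 2)\,u(z)$ and let $Q$ be the distribution with density $r(z)/(1-\tfrac12\log 2)$.

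The only real check is nonnegativity of $r$. For $|z|>\frac12$ we have $u(z)=0$, so $r(z)=f_{XY}(z)\ge 0$. For $|z|\le\frac12$ we have $r(z)=\frac12\log\frac{1}{|z|}-\frac12\log 2=\frac12\log\frac{1}{2|z|}\ge 0$ since $2|z|\le 1$. Hence $r\ge 0$ everywhere. For normalization, since $\int f_{XY}=1$ (it is a probability density) and $\int u=1$, we get $\int r=1-\tfrac12\log 2$, so $r/(1-\tfrac12\log 2)$ integrates to $1$ and is therefore a legitimate probability density; also $\tfrac12\log 2\approx 0.347\in(0,1)$, so the mixture weights $\tfrac12\log 2$ and $1-\tfrac12\log 2$ are valid. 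Rearranging the definition of $r$ then gives exactly $P=(\tfrac12\log 2)\,U[-\tfrac12,\tfrac12]+(1-\tfrac12\log 2)\,Q$, which is the claimed decomposition. The case $X\sim U[-1,0]$ is identical, since by Lemma~\ref{lem:prodUnif} the product $XY$ has the same density there; and if desired, the same decomposition transfers verbatim to $D=\tfrac12\delta_0+\tfrac12 P$ (with the uniform weight halved), showing $D$ "contains a uniform distribution" in the sense needed to apply Theorem~\ref{cor:1}.

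There is no substantive obstacle in this argument; the content is entirely in the choice of the half-width interval $[-\tfrac12,\tfrac12]$ and the constant $\tfrac12\log 2$. The point to be careful about is precisely that this pairing is the "largest uniform piece that fits underneath $f_{XY}$'' on a symmetric interval: the level $\tfrac12\log 2=f_{XY}(\pm\tfrac12)$ is the value of $f_{XY}$ at the endpoints, so $f_{XY}$ drops below this level only outside $[-\tfrac12,\tfrac12]$, which is exactly what the nonnegativity check above records. If one instead wanted a cleaner-looking constant, one could subtract a uniform on all of $[-1,1]$ at a smaller level, but the half-interval choice is what the later application needs, so I would keep it as stated.
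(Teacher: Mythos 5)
Your proof is correct and takes essentially the same approach as the paper: the paper's one-line argument is precisely the observation that the density of $P$ from Lemma~\ref{lem:prodUnif} is bounded below by $\tfrac12\log 2$ on $[-\tfrac12,\tfrac12]$, and your explicit construction of $Q$ via the remainder density $r(z)=f_{XY}(z)-\tfrac12(\log 2)\mathbf{1}_{[-1/2,1/2]}(z)$ just spells out the routine verification that this observation implies the claimed mixture decomposition.
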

\begin{proof}
The corollary follows from the observation that Lemma \ref{lem:prodUnif} shows that pdf of $P$ is lower bounded by $(\log 2)  U\left[-\frac{1}{2},\frac{1}{2}\right]$ on $\left[-\frac{1}{2},\frac{1}{2}\right]$.
\end{proof}

\subsection{Subset sum problem with product of uniform distributions}
\begin{corollary}[\cite{Lueker98}]\label{cor:LuekerCor}
Let $X_1, \dots ,X_n$ be i.i.d. from the distribution in the hypothesis of Corollary \ref{cor:prodUnif2}, where $n \geq C\log \frac{2}{\epsilon}$ (for some universal constant $C$). Then, with probability at least $1-\epsilon$, we have 
\begin{align*}
    \forall z \in [-1, 1],  \qquad \exists S \subset [n] \text{ such that } \left|z - \sum_{i \in S}X_i \right| \leq \epsilon. 
\end{align*}
\end{corollary}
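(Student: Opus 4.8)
The plan is to reduce the statement to Theorem~\ref{cor:1} (Lueker's Corollary~2.5) by exploiting the mixture structure already isolated in Corollary~\ref{cor:prodUnif2}. Write $D = \tfrac{1}{2}\delta_0 + \tfrac{1}{2}P$ for the common law of the $X_i$ (the case $X\sim U[-1,0]$ needs no separate treatment, since by Lemma~\ref{lem:prodUnif} the product law $P$ is the same). By Corollary~\ref{cor:prodUnif2}, $P = \alpha\,U[-\tfrac{1}{2},\tfrac{1}{2}] + (1-\alpha)Q$ with $\alpha = \tfrac{1}{2}\log 2$, hence
\[
D \;=\; \frac{\alpha}{2}\, U\!\left[-\tfrac{1}{2},\tfrac{1}{2}\right] \;+\; \Bigl(1 - \frac{\alpha}{2}\Bigr) Q'
\]
for an appropriate distribution $Q'$. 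Thus each $X_i$ may be generated by first flipping a $\mathrm{Bernoulli}(\alpha/2)$ coin and, on heads, drawing a ``good'' sample from $U[-\tfrac{1}{2},\tfrac{1}{2}]$, otherwise drawing a ``bad'' sample from $Q'$ (this coupling is used only in the analysis). Since $\alpha/2 = \tfrac{1}{4}\log 2$ is a universal constant, a Chernoff bound shows that with probability at least $1-e^{-cn}$ at least $n' := \tfrac{\alpha}{4}n$ of the $X_i$ are good. Because a subset sum is free to discard any coordinates, it suffices to approximate every $z\in[-1,1]$ using only the good samples, i.e.\ using $n'$ i.i.d.\ draws from $U[-\tfrac{1}{2},\tfrac{1}{2}]$.

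Next I would resolve the mismatch between the target range $[-1,1]$ and the support $[-\tfrac{1}{2},\tfrac{1}{2}]$ of the uniform piece by rescaling and batching. Applying Theorem~\ref{cor:1} to $2W_i \sim U[-1,1]$ shows that $m$ i.i.d.\ samples $W_i$ from $U[-\tfrac{1}{2},\tfrac{1}{2}]$ with $m \ge C\log(2/\min\{\eta,\delta_0\})$ suffice, with probability at least $1-\delta_0$, so that every target in $[-\tfrac{1}{4},\tfrac{1}{4}]$ is $\tfrac{\eta}{2}$-approximated by a subset sum of the $W_i$. Partition the $n'$ good samples into $4$ disjoint batches of size $n'/4$, take a union bound over the batches, and for a target $z\in[-1,1]$ approximate $z/4\in[-\tfrac{1}{4},\tfrac{1}{4}]$ within each batch; the union of the four disjoint selected subsets is an $S$ with $\bigl|z - \sum_{i\in S}X_i\bigr| \le 4\cdot\tfrac{\eta}{2} = 2\eta$. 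Taking $\eta = \epsilon/2$ and $\delta_0 = \epsilon/8$, the batch requirement becomes $n'/4 \ge C\log(16/\epsilon)$, and the total failure probability is at most $e^{-cn} + 4\delta_0 \le \epsilon$ once $n \ge C'\log(2/\epsilon)$ for a large enough universal $C'$; relabelling $C' \to C$ yields the claim.

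The main obstacle is precisely this last step: Theorem~\ref{cor:1} only certifies an $\epsilon$-net of $[-\tfrac{1}{2},\tfrac{1}{2}]$ for $U[-1,1]$ samples, which after passing to the $U[-\tfrac{1}{2},\tfrac{1}{2}]$ sub-distribution covers only $[-\tfrac{1}{4},\tfrac{1}{4}]$, whereas we must reach all of $[-1,1]$. The batching argument repairs this at the cost of constant factors, and the delicate bookkeeping is to verify that every constant lost along the way is universal — in particular that $\alpha=\tfrac{1}{2}\log 2$ depends on neither $\epsilon$ nor $n$ — so that the conclusion still has the form $n \ge C\log(2/\epsilon)$. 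An alternative and shorter route is to invoke the general form of Lueker's theorem (valid for any distribution dominating a scaled uniform, as noted in Remark~\ref{rem:1}) directly on $D$, which absorbs the rescaling and batching into Lueker's own martingale analysis.
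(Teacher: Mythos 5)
Your proposal is correct, but it takes a genuinely different route from the paper. The paper's own proof is a one-line citation: it applies Corollary~3.3 of Lueker~\cite{Lueker98} --- the general form of his theorem, valid for any distribution whose law dominates a constant fraction of a scaled uniform --- directly to the mixture $D=\tfrac12\delta_0+\tfrac12 P$ furnished by Corollary~\ref{cor:prodUnif2}, and converts Lueker's expectation bound into a probability bound via Markov's inequality. You instead derive the general case from the pure-uniform special case (Theorem~\ref{cor:1}, i.e.\ Lueker's Corollary~2.5): you couple each $X_i$ with a $\mathrm{Bernoulli}(\alpha/2)$ coin to extract, with probability $1-e^{-cn}$, a sub-sample of $\Omega(n)$ i.i.d.\ $U[-\tfrac12,\tfrac12]$ draws, then rescale and split into four batches to stretch the covered target range from $[-\tfrac14,\tfrac14]$ to $[-1,1]$ at the cost of a factor of $4$ in the error and in the sample size. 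The bookkeeping checks out: the conditioning on the coin vector legitimately leaves the good samples i.i.d.\ uniform, the four disjoint subsets compose by the triangle inequality, $\alpha=\tfrac12\log 2$ is universal, and the failure probability $e^{-cn}+4\delta_0$ is absorbed into $\epsilon$ for $n\ge C'\log(2/\epsilon)$. What the paper's route buys is brevity and fidelity to the cited source (the corollary is attributed to Lueker precisely because his Corollary~3.3 already handles mixtures); what your route buys is self-containedness --- only the uniform-case statement actually quoted in the main text is needed, and the reduction makes explicit why "containing a uniform component'' suffices, which is the same mechanism invoked informally in Remark~\ref{rem:1}. Your closing remark correctly identifies the paper's shortcut as the alternative.
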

\begin{proof}
This is a direct application of Markov's inequality on Corollary 3.3 from \cite{Lueker98} applied to the distribution in the hypothesis of Corollary \ref{cor:prodUnif2}.
\end{proof}

\end{document}